\newcommand{\redt}[1]{\textcolor{red}{#1}}
\newcommand{\norm}[1]{\|#1\|}
\newtheorem{lemma}{Lemma}
\newtheorem{theorem}{Theorem}
\newtheorem{corollary}{Corollary}
\newtheorem{remark}{Remark}
\newtheorem*{theorem*}{Theorem}   
\newtheorem*{proposition*}{Proposition}   
\begin{document}

\title{A Unified Analysis of AdaGrad with Weighted Aggregation and Momentum Acceleration}

\author{Li Shen, Congliang Chen, Fangyu Zou, Zequn Jie, Ju Sun, Wei Liu,~\IEEEmembership{Fellow,~IEEE}
\thanks{This work is supported by Science and Technology Innovation 2030 –“Brain Science and Brain-like Research” key Project (No. 2021ZD0201405).}
\thanks{Li. Shen is with the JD Explore Academy, Beijing, China (email: mathshenli@gmail.com).}
\thanks{Congliang Chen is with The Chinese University of Hong Kong, Shenzhen, China (email: chcoli2007@163.com).}
\thanks{Fangyu Zou is with Facebook, USA (email: fangyuzou@gmail.com).}
\thanks{Zequn Jie is with Meituan, Beijing, China (email: zequn.nus@gmail.com).}
\thanks{Ju Sun is with University of Minnesota, Twin Cities, USA  (email: sunjunus@gmail.com).}
\thanks{Wei Liu is with Tencent, Shenzhen, China (email:wl2223@columbia.edu).}
\thanks{Manuscript received July 29, 2022; revised January 08, 2023; Accepted May 15, 2023}
}

\markboth{Journal of \LaTeX\ Class Files,~Vol.~14, No.~8, August~2021}%
{Shell \MakeLowercase{\textit{et al.}}: A Sample Article Using IEEEtran.cls for IEEE Journals}


\maketitle

\begin{abstract}
Integrating adaptive learning rate and momentum techniques into SGD leads to a large class of efficiently accelerated adaptive stochastic algorithms, such as AdaGrad, RMSProp, Adam, AccAdaGrad, \textit{etc}. In spite of their effectiveness in practice, there is still a large gap in their theories of convergences, especially in the difficult non-convex stochastic setting. To fill this gap, we propose \emph{weighted AdaGrad with unified momentum}, dubbed AdaUSM, which has the main characteristics that (1) it incorporates a unified momentum scheme which covers both the heavy ball momentum and the Nesterov accelerated gradient momentum; (2) it adopts a novel weighted adaptive learning rate that can unify the learning rates of AdaGrad, AccAdaGrad, Adam, and RMSProp. Moreover, when we take polynomially growing weights in AdaUSM, we obtain its $\mathcal{O}(\log(T)/\sqrt{T})$ convergence rate in the non-convex stochastic setting. We also show that the adaptive learning rates of Adam and RMSProp correspond to taking exponentially growing weights in AdaUSM, thereby providing a new perspective for understanding Adam and RMSProp. Lastly, comparative experiments of AdaUSM against SGD with momentum, AdaGrad, AdaEMA, Adam, and AMSGrad on various deep learning models and datasets are also carried out.
\end{abstract}

\begin{IEEEkeywords}
Non-convex optimization, stochastic gradient descent, convergence rate.
\end{IEEEkeywords}

\section{Introduction}
\IEEEPARstart{I}{n} this work we consider the following general non-convex stochastic optimization problem: 
\begin{align}\label{population-risk}
\min_{x\in \mathbb{R}^{d}}\ f(x) := \mathbb{E}_{z}\,\big[F(x,z)\big],
\end{align}
where $\mathbb{E}_{z}[\cdot]$ denotes the expectation with respect to random variable $z$.
We assume that $f$ is bounded from below, \emph{i.e.}, $f^* = \inf_{x\in\mathbb{R}^d} f(x) > \infty$, and its gradient $\nabla{f}(\cdot)$ is $L$-Lipschitz continuous.  

Problem \eqref{population-risk} arises from many statistical learning (\textit{e.g.}, logistic regression, AUC maximization) and deep learning models \cite{goodfellow2016deep,lecun2015deep}. 
In general, one only has access to noisy estimates of $\nabla f$, as the expectation in problem \eqref{population-risk} can often only be approximated as a finite sum. 
Hence, one of the most popular algorithms to solve problem \eqref{population-risk} is Stochastic Gradient Decent (SGD) \cite{robbins1985stochastic,bottou2018optimization}:
\begin{equation}\label{SGD}
x_{t+1} := x_t - \eta_t g_t, 
\end{equation}
where $\eta_t$ is the learning rate and $g_t$ is the noisy gradient estimate of $\nabla{f}(x)$ in the $t$-th iteration.
Its convergence rates for both convex and non-convex settings have been established in \cite{bottou1998online,ghadimi2013stochastic}. 

However, vanilla SGD suffers from slow convergence, and its performance is sensitive to the learning rate---which is tricky to tune. 
Many techniques have been introduced to improve the convergence speed and robustness of SGD,  such as variance reduction~\cite{defazio2014saga,johnson2013accelerating,nguyen2017sarah}, adaptive learning rate~\cite{duchi2011adaptive,kingma2014adam}, and momentum acceleration~\cite{polyak1964some,nesterov1983method,levy2018online}. Among them, adaptive learning rate and momentum acceleration techniques are most economic, since they merely require slightly more computations per iteration. 
SGD with adaptive learning rate was first proposed as AdaGrad~\cite{mcmahan2010adaptive,duchi2011adaptive} and the learning rate is adjusted by cumulative gradient magnitudes: 
\begin{equation}\label{average-learning-rate}
\eta_t = \frac{\eta}{\sqrt{\textstyle\sum_{i=1}^t g_i^2} + \epsilon},
\end{equation}
where $\epsilon, \eta > 0$ are fixed parameters. 
On the other hand, Heavy Ball (HB)~\cite{polyak1964some,ghadimi2015global} and Nesterov Accelerated Gradient (NAG)~\cite{nesterov1983method,nesterov2013introductory} are two most popular momentum acceleration techniques, which have been extensively studied for stochastic optimization problems \cite{ghadimi2016accelerated,yan2018unified,levy2018online}: 
\begin{gather}\label{two-momentum}
\textbf{(SHB)}:\ \left\{\ 
\begin{aligned}
& m_t = \mu m_{t-1} - \eta_t g_{t}\\
& x_{t+1} = x_t + m_t
\end{aligned}, 
\right.\\ 
 \textbf{(SNAG)}:\ \left\{
\begin{aligned}
& y_{t+1} = x_{t} - \eta_t g_{t} \\
& x_{t+1} = y_{t+1} + \mu (y_{t+1}-y_t)
\end{aligned}
\right. , 
\end{gather}
where $x_1=y_1\in\mathbb{R}^d$, $m_{0}=\bm{0}\in\mathbb{R}^{d}$, and $\mu \in [0, 1)$ is the momentum factor. 

Both the adaptive learning rate and momentum techniques have been individually investigated and have displayed to be effective in practice, so they are independently and widely applied in tasks such as training deep networks~\cite{krizhevsky2012imagenet,sutskever2013importance,kingma2014adam,reddi2018convergence}.
It is natural to consider: \emph{Can we effectively incorporate both techniques at the same time so as to inherit their dual advantages and moreover develop convergence theory for this scenario, especially in the more difficult non-convex stochastic setting?}
It is expected that there will be new difficulties in the analysis which comes from the combination of both techniques,
comparing to merely using adaptive learning rate or merely using momentum technique.
To the best of our knowledge, Levy \textit{et al.} \cite{levy2018online} firstly attempted to combine the adaptive learning rate with NAG momentum, which yields the AccAdaGrad algorithm. However, its convergence is limited to the stochastic convex setting. 
Yan \textit{et al.} \cite{yan2018unified} unified SHB and SNAG to a three-step iterate without considering the adaptive learning rate in Eq.~\eqref{average-learning-rate}. 

In this work, we revisit the momentum acceleration technique~\cite{polyak1964some,nesterov1983method} and adaptive learning rate~\cite{duchi2011adaptive,kingma2014adam}, and propose weighted AdaGrad with unified stochastic momentum, dubbed AdaUSM, to solve the general non-convex stochastic optimization problem \eqref{population-risk}. Specifically, the proposed AdaUSM has two main features: it develops a novel {U}nified {S}tochastic {M}omentum (USM) scheme to cover SHB and SNAG, entirely different from the three-step scheme in~\cite{yan2018unified}, and it generalizes the adaptive learning rate in Eq.~\eqref{average-learning-rate} to a more general weighted adaptive learning rate (see Section \ref{sec-AdaUSM}) that can unify the adaptive learning rates of AdaGrad, AccAdaGrad, and Adam into a succinct framework.  
In contrast to those in AdaGrad \cite{duchi2011adaptive}, the weighted adaptive learning rate in AdaUSM is estimated via a novel weighted gradient accumulation technique, which puts more weights on the most recent stochastic gradient estimates. 
Moreover, to make AdaUSM more practical for large-scale problems, a coordinate-wise weighted adaptive learning rate with a low computational cost is used. 

We also characterize the $\mathcal{O}\big(\log(T)/\sqrt{T}\big)$ convergence rate of AdaUSM in the non-convex stochastic setting, when we take polynomially growing weights in AdaUSM. 
When momentum is NAG and weights are set as the same as those in \cite{levy2018online}, AdaUSM reduces to AccAdaGrad \cite{levy2018online}. In consequence, the convergence rate of AccAdaGrad in the non-convex setting is derived directly as a byproduct. 
Thus, our work generalizes AccAdaGrad \cite{levy2018online} in three aspects: (i) more general weights in estimating the adaptive learning rate; (ii) new unified momentum including both NAG and HB; (iii) the convergence rate in the more difficult non-convex stochastic setting.  
Our contributions are three-fold: 
\begin{itemize} 
\item We develop a new weighted gradient accumulation technique to estimate the adaptive learning rate,
and propose a novel unified stochastic momentum scheme to cover SHB and SNAG.
We then integrate the weighted coordinate-wise AdaGrad with a unified momentum mechanism, yielding a novel adaptive stochastic momentum algorithm, dubbed AdaUSM. 

\item We establish the $\mathcal{O}({\log{(T)}}{/}{\sqrt{T}})$ non-asymptotic convergence rate of AdaUSM under the general non-convex stochastic setting. Our assumptions are natural and mild. 

\item We show that the adaptive learning rates of Adam and RMSProp correspond to taking exponentially growing weights in AdaUSM, which thereby provides a new perspective for understanding Adam and RMSProp.

\end{itemize}

{\bf Related Works.}\quad
There exist several works to study the convergence rates of adaptive SGD in the non-convex stochastic setting. All of the related works share the same assumption on the objective function that it has Lipschitz gradients and unbiased estimation for gradients. Still, they have different assumptions on the second-order statistic information of stochastic gradients. Specifically, Li and Orabona \cite{li2018convergence} first proved the global convergence of perturbed AdaGrad by assuming the variance of stochastic gradients is bounded. Manzil Zaheer \textit{et al.}\cite{zaheer2018adaptive} proved that Adam could converge near to the stationary point where the ratio is related to the noise level. Ward \textit{et al.}~\cite{ward2018adagrad} established $\mathcal{O}(\log{(T)}/\sqrt{T})$ convergence rates with the measure $(\mathbb{E} (\|\nabla f(x_t)\|^{4/3}))^{3/2}$for the original AdaGrad \cite{duchi2011adaptive} and WNgrad \cite{wu2018wngrad} by adding a bounded ture gradient assumption.
The same convergence rates of Adam/RMSProp are established in \cite{zou2018sufficient,chen2022towards} with the same assumptions in \cite{duchi2011adaptive}. AMSGrad \cite{chen2018convergence} were also established in the non-convex stochastic setting with directly assuming bounded stochastic gradients in the same rate, but the measure is improved to $\mathbb{E} \|\nabla f(x_t)\|^2$. Besides, \cite{guo2021novel} established $\mathcal{O}(\log{(T)}/\sqrt{T})$ with the same measure as it in \cite{duchi2011adaptive} for Adam with the assumption that the elements in adaptive stepsize are upper and lower bounded. We list the detailed comparison of these results in Table \ref{tab1}.

\begin{table}[H]
\label{tab1}
\caption{Comparison of different analyses.
Here we consider the assumptions on the second-order information of stochastic gradients, where "BG" denotes true gradients are bounded, "BSG" denotes stochastic gradients are uniformly bounded, "BV" denotes  the variance of stochastic gradients is bounded, and "B$\eta$" denotes $\eta_t$'s are upper and lower bounded. "*" denotes convergence for $(\mathbb{E} \|\nabla f(x_t)\|^{4/3})^{3/2}$.}
\begin{center}
\begin{tabular}{ccc}
\hline
Algorithm & Assumption & Convergence Rate\\
\hline
AdaGrad\cite{li2018convergence} & BV & Converge\\
Adam\cite{zaheer2018adaptive} & BV & Converge to neighborhood\\
AdaGrad\cite{ward2018adagrad}& BG+BV & $\mathcal{O}(logT/\sqrt{T})^*$\\
WNgrad\cite{ward2018adagrad}& BG+BV & $\mathcal{O}(logT/\sqrt{T})^*$\\
Adam\cite{zou2018sufficient}&BG+BV &$\mathcal{O}(logT/\sqrt{T})^*$\\
RMSProp\cite{zou2018sufficient}&BG+BV &$\mathcal{O}(logT/\sqrt{T})^*$\\
AMSGrad\cite{chen2018convergence} & BSG &$\mathcal{O}(logT/\sqrt{T})$\\
Adam\cite{guo2021novel} & B$\eta$ &$\mathcal{O}(logT/\sqrt{T})$\\
AdaUSM & BG+BV & $\mathcal{O}(logT/\sqrt{T})^*$
\end{tabular}
\end{center}
\end{table}

\section{Preliminaries}
\paragraph{Notations.} 
$T$ denotes the maximum number of iterations.
The noisy gradient of $f$ at the $t$-th iteration is denoted by $g_t$ for all $t=1,2,\cdots, T$. 
We use $\mathbb{E}[\cdot]$ to denote expectation as usual, and $\mathbb{E}_t[\cdot]$ as the conditional expectation with respect to $g_t$ conditioned on the random variables $\{g_1, g_2, \cdots, g_{t-1}\}$.

In this paper, we allow differential learning rates across coordinates, so the learning rate $\eta_t$ is a vector in $\mathbb{R}^d$. Given a vector $v \in \mathbb{R}^d$ we denote its $k$-th coordinate by $v_{k}$. 
The $k$-th coordinate of the gradient $\nabla f(x)$ is denoted by $\nabla _k f(x)$. 
Given two vectors $v_t, w_t \in \mathbb{R}^d$, the inner product between them is denoted by $\langle v_t, w_t \rangle :=\sum_{k=1}^d v_{t,k} w_{t,k}$. We also heavily use the coordinate-wise product between $v$ and $w$, denoted as $vw \in \mathbb R^d$, with $(vw)_k = v_k w_k$. 
Division by a vector is defined similarly. 
Given a vector $v\!\in\! \mathbb{R}^{d}$, we define the weighted norm:
$\norm{\nabla f(x)}_v^2:=\textstyle\sum_{k=1}^d v_k |\nabla_k f(x)|^2.$
Norm $\norm{\cdot}$ without any subscript is the Euclidean norm, and $\norm{\cdot}_1$ is defined as $\norm{v}_1 = \sum_{k=1}^d |v_k|$. 
Let $\bm{0} = (0,\ldots,0)^\top \in \mathbb{R}^{d}$ and $\bm{\epsilon} = (\epsilon,\ldots,\epsilon)^\top \in\mathbb{R}^d$.

\paragraph{Assumptions.} 
We assume that $g_t$, $t= 1, 2, \ldots, T$ are independent of each other. Moreover, 
\begin{itemize} 
\item[(\textbf{A1})] $\mathbb{E}_t\left[g_t\right]=\nabla f(x_t)$, \textit{i.e.}, $g_t$ is an unbiased estimator;
\item[(\textbf{A2})] $\mathbb{E}_t\norm{g_t}^2 \leq\sigma^2$, \textit{i.e.}, the second-order moment of $g_{t}$ is bounded.
\end{itemize}
Notice that the condition (\textbf{A2}) is slightly weaker than that in \cite{chen2018convergence} which assumes that stochastic gradient estimate $g_{t}$ is uniformly bounded, \textit{i.e.}, $\|g_{t}\|\le \sigma$.

\section{Weighted AdaGrad with Unified Momentum}\label{sec-AdaUSM}

 We describe the two main ingredients of AdaUSM: the unified stochastic momentum formulation of SHB and SNAG (see Subsection \ref{sec-USM}), and the weighted adaptive learning rate (see Subsection \ref{sec-weigthed-adagrad}).
 
\subsection{Unified Stochastic Momentum (USM)}\label{sec-USM}
By introducing $m_t = y_{t+1} - y_t$ with $m_0 = 0$, the iterate of SNAG can be equivalently written as
\[
(\textbf{SNAG}):
\left\{
\begin{aligned}
& m_t = \mu m_{t-1} - \eta_t g_t, \\
& x_{t+1} = x_t + m_t + \mu (m_t - m_{t-1}).
\end{aligned}
\right.
\]
Comparing SHB and above SNAG, the difference lies in that SNAG places more emphasis on the current momentum $m_t$. Hence, we can rewrite SHB and SNAG in the following unified form:
\begin{align}\label{USM}
(\textbf{USM}):\ \left\{
\begin{aligned}
& m_t = \mu m_{t-1} - \eta_t g_t, \\
& x_{t+1} = x_t + m_t + \lambda \mu (m_t - m_{t-1}),
\end{aligned}
\right.
\end{align}
where $\lambda \geq 0$ is a constant. When $\lambda = 0$, it is SHB; when $\lambda = 1$, it is SNAG. We call $\lambda$ the interpolation factor. For any $\mu \in [0, 1)$, $\lambda$ can be chosen from $[0,1/(1-\mu)]$. 
\begin{remark}
Yan \textit{et al.} \cite{yan2018unified} unified SHB and SNAG as a three-step iterate scheme as follows:
\begin{gather}
y_{t+1} = x_{t} - \eta_t g_t, \label{UM-ijcai-eq1}\\
y^{s}_{t+1} = x_t -s\eta_{t}g_{t}, \label{UM-ijcai-eq2}\\
x_{t+1} = y_{t+1} + \mu(y^{s}_{t+1} - y^{s}_{t}), \label{UM-ijcai-eq3}
\end{gather}
where $y_{0}^{s} = x_{0}$. Its convergence rate has been established for $\eta_{t} = \mathcal{O}(1/\sqrt{t})$. Notably, USM is slightly simpler than Eqs.~\eqref{UM-ijcai-eq1}-\eqref{UM-ijcai-eq3}, and the learning rate $\eta_{t}$ in USM is adaptively determined. 
\end{remark}

\subsection{Weighted Adaptive Learning Rate}\label{sec-weigthed-adagrad} 
 
We generalize the learning rate in Eq.~\eqref{average-learning-rate} by assigning different weights to the past stochastic gradients accumulated. It is defined as follows:
\begin{align}\label{adaptive-learning-rate} 
\eta_{t,k}  
&=  \frac{\eta}{\sqrt{{\sum_{i=1}^t a_i g_{i,k}^2}/{\bar{a}_t}} + \epsilon}  \nonumber \\
&=  \frac{\eta/\sqrt{t}}{\sqrt{{\sum_{i=1}^t a_i g_{i,k}^2}/({\sum_{i=1}^t a_i}}) + \epsilon/\sqrt{t}},
\end{align}
for $k = 1, 2, \cdots, d$, where $a_1, a_2, \cdots, a_t > 0$ and $\bar{a}_t = \sum_{i=1}^t a_i/t$. Here, $\eta/\sqrt{t}$ can be understood as the base learning rate. 
The classical AdaGrad corresponds to taking $a_t = 1$ for all $t$ in Eq.~\eqref{adaptive-learning-rate}, \textit{i.e.}, uniform weights. However, recent gradients tend to carry more information of local geometries than remote ones. Hence, it is natural to assign the recent gradients more weights. 
A typical choice for such weights is to choose $a_t = t^\alpha$ for $\alpha > 0$, which grows in a polynomial rate. For instance, in AccAdaGrad \cite{levy2018online} weights are chosen to be $a_t=[(1+t)/4]^2$ for $t\ge 3$ and $a_t=1$ for $0 \le t \le 2$, respectively.

\subsection{AdaUSM: Weighted AdaGrad with USM}
In this subsection, we present the AdaUSM algorithm, which effectively integrates the weighted adaptive learning rate in Eq.~\eqref{adaptive-learning-rate} with the USM technique in Eq.~\eqref{USM}, and establish its convergence rate. 

\begin{algorithm}[H]
\caption{\quad AdaUSM: Weighted AdaGrad with USM}
\label{Alg:AdaMoment}
\begin{algorithmic}[1]
   \STATE {\bf Parameters:} Choose $x_1 \!\in\! \mathbb{R}^d$, fixed parameter $\eta \!\ge\! 0$, momentum factor $\mu$, and initial accumulator factor $\epsilon \!>\! 0$. Set $m_0 \!= \!\bm{0}, v_0 \!=\! \bm{0}$, $A_0 \!=\! 0$, $0 \!\leq \mu \!< 1$, $0 \!\leq\! \lambda \!\leq\! 1/(1\!-\!\mu)$, and weights $\{a_t\}$.
   \FOR{$t= 1,2,\ldots,T$}
    \STATE Sample a stochastic gradient $g_t$;
        \FOR {$k=1,2,\ldots,d$}
        \STATE $v_{t,k} = v_{t-1,k} + a_t g_{t,k}^2$;
        \STATE $A_t = A_{t-1} + a_t$;
        \STATE $\bar{a}_t = A_t / t$;
        \STATE $m_{t,k} = \mu m_{t-1,k} - \eta g_{t,k}/(\sqrt{v_{t,k}/\bar{a}_t}+\epsilon)$;
        \STATE $ x_{t+1,k} = x_{t,k} + m_{t,k} + \lambda \mu (m_{t,k} - m_{t-1,k})$;
        \ENDFOR
   \ENDFOR
 \end{algorithmic}
 \end{algorithm} 
 \vspace{-0.2cm}

 Note that AdaUSM extends the AccAdaGrad in \cite{levy2018online} by using more general weighted parameters $a_{t}$ and momentum accelerated mechanisms in the non-convex stochastic setting. 
 In addition, when $\lambda = 0$, AdaUSM reduces to the weighted AdaGrad with a heavy ball momentum, which we denote by {\bf AdaHB} for short. 
 When $\lambda = 1$, AdaUSM reduces to the weighted AdaGrad with Nesterov accelerated  gradient momentum, which we denote by {\bf AdaNAG} for short. 

\begin{algorithm}[H]
\caption{\ AdaHB:  AdaGrad with HB}
\label{Alg:AdaHB}
\begin{algorithmic}[1]
    \STATE {\bf Parameters:} Choose $x_1 \!\in\! \mathbb{R}^d$, fixed parameter $\eta \ge 0$, momentum factor $\mu$, initial accumulator value $\epsilon \ge 0$, and parameters $\{a_t\}$. Set $m_0 = \bm{0},\ v_0 = \bm{0}$, and $A_0 = 0$.
   \FOR{$t= 1,2,\ldots,T$}
    \STATE $A_t = A_{t-1} + a_t$;
    \STATE $\bar{a}_t = A_t / t$;
    \STATE Sample a stochastic gradient $g_t$;
        \FOR {$k=1,2,\ldots,d$}
        \STATE $v_{t,k} = v_{t-1,k} + a_t g_{t,k}^2$;
        \STATE $m_{t,k} = \mu m_{t-1,k} - \frac{\eta g_{t,k}}{\epsilon + \sqrt{v_{t,k}/{\bar{a}_t}}}$;
        \STATE $ x_{t+1,k} = x_{t,k} + m_{t,k}$;    
        \ENDFOR
   \ENDFOR
 \end{algorithmic}
 \end{algorithm} 

\begin{algorithm}[H]
\caption{\ AdaNAG:  AdaGrad with NAG}
\label{Alg:AdaNAG}
\begin{algorithmic}[1]
    \STATE {\bf Parameters:} Choose $x_1 \!\in\! \mathbb{R}^d$, fixed parameter $\eta \ge 0$, momentum factor $\mu$, initial accumulator value $\epsilon \ge 0$, and parameters $\{a_t\}$. Set  $m_0 = \bm{0},\ v_0 = \bm{0}$, and $A_0 = 0$.
   \FOR{$t= 1,2,\ldots,T$}
    \STATE $A_t = A_{t-1} + a_t$;
    \STATE $\bar{a}_t = A_t / t$;
    \STATE Sample a stochastic gradient $g_t$;
        \FOR {$k=1,2,\ldots,d$}
        \STATE $v_{t,k} = v_{t-1,k} + a_t g_{t,k}^2$;
        \STATE $m_{t,k} = \mu m_{t-1,k} - \frac{\eta g_{t,k}}{\epsilon + \sqrt{v_{t,k}/\bar{a}_t}}$;
        \STATE $ x_{t+1,k} = x_{t,k} + m_{t,k} + \mu (m_{t,k} - m_{t-1,k})$;   
        \ENDFOR
   \ENDFOR
\end{algorithmic}
\end{algorithm}         

\subsection{Convergence Results}
\begin{theorem}\label{AdaUSM-high-probility}
Let $\{x_t\}$ be a sequence generated by AdaUSM. 
Assume that the noisy gradient $g_{t}$ satisfies assumptions (\textbf{A1})- (\textbf{A2}), and the weight $\{a_t\}$ is non-decreasing in $t$. 
Let $\tau$ be uniformly randomly drawn from $\{1, 2, \ldots T\}$. Then
\begin{equation*}
\big(\mathbb{E}\norm{\nabla f(x_\tau)}^{4/3}\big)^{3/2} \leq Bound(T) = \mathcal{O}\big(\frac{d\log\big(\sum_{t=1}^T a_t\big)}{\sqrt{T}}\big),
\end{equation*}
where 
\begin{equation*}
    Bound(T) = \frac{\sqrt{2\epsilon^2 + 2\sigma^2 T}}{\eta T} \left[C_1  +  C_2\log\big(1 +  \frac{\sigma^2}{\epsilon^2}\sum_{t=1}^T a_t\big)\right]
\end{equation*}
with $C_1 =  \frac{2(f(x_1) -f^*)}{(1 +\lambda\mu)\eta}$ and $C_2 =\frac{2\eta(1+2\lambda)^2 L d}{(1+\lambda\mu)(1-\mu)^3}+\frac{4\sigma d}{1-\mu}$, respectively.
\end{theorem}

\begin{remark}
When we take $a_t = t^\alpha$ for some constant power $\alpha \geq 0$, then $\sum_{t=1}^T a_t\!=\!\mathcal{O}(T^{\alpha+1})$ and conditions (\textbf{i})-(\textbf{ii}) are satisfied. Hence, AdaUSM with such weights has the $\mathcal{O}(\log(T)/\sqrt{T})$ convergence rate. 
In fact, AdaUSM is convergent as long as $\log\big(\textstyle\sum_{i=1}^{T}a_{t}\big)\!=\!o(\sqrt{T})$ . 
\end{remark}

When taking interpolation factor $\lambda = 1$, and 
$$ a_t= \left\{
\begin{aligned}
& 1, & t \le 2 \\
& \big((1 + t)/4\big)^2, & t \ge 3
\end{aligned}
\right.
$$
AdaUSM reduces to coordinate-wise AccAdaGrad \cite{levy2017online}. In this case, $\sum_{t=1}^T a_t \!=\! \mathcal{O}(T^{3})$. Thus, we have the following corollary for the convergence rate of AccAdaGrad in the non-convex stochastic setting.
\begin{corollary}\label{AdaGrad-high-probility}
Assume the same setting as Theorem \ref{AdaUSM-high-probility}. 
Let $\tau$ be randomly selected from $\{1, 2,\ldots T\}$ with equal probability $\mathcal{P}(\tau = t) = 1/T$. Then
\begin{equation*} 
\left(\mathbb{E}\left[\norm{\nabla f(x_\tau)}^{4/3}\right]\right)^{{3/2}} \leq Bound(T) = \mathcal{O}\left({d}{\log T}/{\sqrt{T}}\right),
\end{equation*}
where 
\begin{equation}
\small   Bound(T)=\frac{\sqrt{2\epsilon^2+2\sigma^2T}}{\eta T} \left[C_1+C_{2}\log\big(1+ \frac{\sigma^2T^3}{\epsilon^2}\big)\right] 
\end{equation}
and $C_{1} = \frac{2}{(1+\mu)\eta}(f(x_1) - f^*)$ and $C_2 =\frac{18\eta L d}{(1+\mu)(1-\mu)^3}+\frac{4\sigma d}{1-\mu}$, respectively.
\end{corollary}

\begin{remark}
 The $\mathcal{O}({\log(T)}/{T})$ non-asymptotic convergence rate measured by the objective for AccAdaGrad has already been established in \cite{levy2018online} in the convex stochastic setting. 
 Corollary \ref{AdaGrad-high-probility} provides the convergence rate of coordinate-wise AccAdaGrad measured by gradient residual, which also supplements the results in \cite{levy2018online} in the non-convex stochastic setting.
\end{remark}

\begin{corollary}
For any $\delta \in (0,1)$, assume the same setting as Theorem \ref{AdaUSM-high-probility}. 
Let $\tau$ be randomly selected from $\{1, 2,\ldots T\}$ with equal probability $\mathcal{P}(\tau = t) = 1/T$. Then
\begin{equation*} 
\mathcal{P}\big(\norm{\nabla f(x_\tau)}^2 < (Bound(T)/\delta)\big) > 1-\delta^{2/3}.
\end{equation*}
where 
\begin{equation}
\small   Bound(T)=\frac{\sqrt{2\epsilon^2+2\sigma^2T}}{\eta T} \left[C_1+C_{2}\log\big(1+ \frac{\sigma^2T^3}{\epsilon^2}\big)\right] 
\end{equation}
and $C_{1} = \frac{2}{(1+\mu)\eta}(f(x_1) - f^*)$ and $C_2 =\frac{18\eta L d}{(1+\mu)(1-\mu)^3}+\frac{4\sigma d}{1-\mu}$, respectively.
\end{corollary}
\begin{remark}
By Markov's inequality, for any random variable $\xi$ we have 
\begin{equation}
\mathcal{P}(|\xi| \ge \Gamma) \leq \mathbb{E}[|\xi|] / \Gamma,
\end{equation}
where $\mathcal{P}$ denotes the probability.
For any small positive $\delta$, by taking $\xi = \norm{\nabla f(x_\tau)}^{4/3}$ and $\Gamma = (Bound(T)/\delta)^{2/3}$ we have 
\begin{equation}
\begin{split}
    & \mathcal{P}\big(\norm{\nabla f(x_\tau)}^2 \geq (Bound(T)/\delta)\big) \\
    & = \mathcal{P}\big(\norm{\nabla f(x_\tau)}^{4/3} \ge (Bound(T)/\delta)^{2/3}\big) \\ 
    & \leq \mathbb{E}(\norm{\nabla f(x_\tau)}^{4/3}) * (\delta / Bound(T))^{2/3} 
    \leq \delta^{2/3}.
\end{split}
\end{equation}
Hence, $\mathcal{P}\big(\norm{\nabla f(x)}^2 < (Bound(T)/\delta)\big) > 1 - \delta^{2/3}$. The result shows that the square of absolute norm of the gradient converges to zero with high probability.
\end{remark}

\section{Relationships with Adam and RMSProp}\label{relationship}

In this section, we show that the exponential moving average (EMA) technique in estimating adaptive learning rates in Adam \cite{kingma2014adam} and RMSProp \cite{hinton2012lecture} is a special case of the weighted adaptive learning rate in Eq.~\eqref{adaptive-learning-rate}, 
\textit{i.e.}, their adaptive learning rates correspond to taking exponentially growing weights in AdaUSM, which thereby provides a new angle for understanding Adam and RMSProp.

\subsection{Adam}\label{Adam-momentum}
For better comparison, we first represent the $t$-th iterate scheme of Adam \cite{kingma2014adam} as follows:
\begin{equation*}
\left\{
\begin{aligned}
& \widehat{m}_{t,k} = \beta_{1} \widehat{m}_{t-1,k} + (1-\beta_{1}) g_{t,k},\\
& m_{t,k} = \widehat{m}_{t,k}/(1-\beta_{1}^{t}), \\
& \widehat{v}_{t,k} = \beta_2 \widehat{v}_{t-1,k} +  (1-\beta_2)g_{t,k}^2, \\
&  v_{t,k} = \widehat{v}_{t,k}/(1-\beta_{2}^{t}), \\
& x_{t,k} =  x_{t-1,k} - \eta m_{t,k}/(\sqrt{t v_{t,k}}+\sqrt{t}\epsilon),
\end{aligned}
\right.
\end{equation*}
for $k = 1, \ldots,d$, where $\beta_{1}, \beta_{2} \in [0,1)$ and $\eta > 0$ are constants, and $\epsilon$ is a sufficiently small constant. 
Denoting $\eta_{t,k} = \eta /(\sqrt{t v_{t,k}}+t\epsilon)$, we can simplify the iterations of Adam as
\begin{equation} \label{simplify-adam-no}
 \left\{
\begin{aligned}
& \widehat{m}_{t,k} = \beta_{1} \widehat{m}_{t-1,k} + (1-\beta_{1}) g_{t,k}, \\
& m_{t,k} = \widehat{m}_{t,k}/(1-\beta_{1}^{t}), \\
& x_{t+1,k} = x_{t,k} - \eta_{t,k} m_{t,k}.
\end{aligned}
\right.
\end{equation} 
Below, we show that AdaUSM and Adam differ in two aspects: momentum estimation $m_{t,k}$ and coordinate-wise adaptive learning rate $\eta_{t,k}$.

{\bf Momentum estimation.}\ ~The EMA technique is widely used in the momentum estimation step in Adam \cite{kingma2014adam} and AMSGrad \cite{reddi2018convergence}. Without loss of generality\footnote{Note that an extra step $m_{t,k} = \widehat{m}_{t,k}/(1-\beta_{1}^{t})$ is introduced in Adam to correct the bias. However, the bias-correction step does not affect the global convergence \cite{reddi2018convergence,chen2018convergence}.},
we consider the simplified EMA step 
\begin{equation}\label{simplify-adam}
 \left\{
\begin{aligned}
& m_{t,k} = \beta_{1} m_{t-1,k} + (1-\beta_{1}) g_{t,k}, \\
& x_{t+1,k} = x_{t,k} - \eta_{t,k} m_{t,k}.
\end{aligned}
\right.
\end{equation} 
To show the difference clearly, we merely compare HB momentum with EMA momentum. Let $\widetilde{m}_{t,k}  =  -\eta_{t,k}m_{t,k}$. By the first equality in Eq. \eqref{simplify-adam}, we have
\begin{align}\label{EMA-HB}
\begin{split}
\widetilde{m}_{t,k} 
& = \beta_{1}\eta_{t,k} m_{t-1,k} - (1-\beta_1)\eta_{t,k} g_{t,k} \\
& = \beta_{1}\widetilde{m}_{t-1,k} - (1 - \beta_1)\eta_{t,k} g_{t,k} \\
&\qquad + (\eta_{t,k} - \eta_{t-1,k})m_{t-1,k}.
\end{split}
\end{align}
The EMA momentum corresponds to the HB momentum with $(1-\beta_1)\eta_{t,k}$ stepsize and an extra error term $(\eta_{t,k} -\eta_{t-1,k})m_{t-1,k}$. The error term vanishes if the stepsize $\eta_{t,k}$ is taking constant\footnote{Since  both the learning rates $\eta_{t,k}$ in AdaHB and Adam are determined adaptively, we do not have $\eta_{t,k} = \eta_{t-1,k}$.}. More precisely, if we write the iterate $x_{t,k}$ in terms of stochastic gradients $\{g_{1,k},g_{2,k}, \ldots, g_{t,k}\}$ and eliminate $m_{t,k}$, we obtain
\begin{equation*}
 \left\{
\begin{aligned}
& {\rm AdaUSM:\ } x_{t+1,k} = x_{t,k}-\textstyle\sum_{i=1}^t \eta_{i,k} g_{i,k} \beta_1^{t-i}, \\
& {\rm EMA:\ } x_{t+1,k} = x_{t,k} - (1-\beta_1)\eta_{t,k}\textstyle\sum_{i=1}^t g_{i,k} \beta_1^{t-i}.
\end{aligned}
\right.
\end{equation*} 
One can clearly see that the key difference lies in whether we use the current stepsize or the past stepsize in the exponential moving average. 
One can see that AdaUSM uses the past step-sizes but EMA uses only the current one in exponential moving averaging.
Moreover, when momentum factor  $\beta_{1}$ is very close to $1$, the update of $x_{k}$ via EMA could stagnate since $\|x_{k+1}-x_{k}\|=\mathcal{O}(1-\beta_1)$. This dilemma will not appear in AdaUSM. 

{\bf Adaptive learning rate.}\ ~Note that $\widehat{v}_{t,k} =  \beta_2 \widehat{v}_{t-1,k} +  (1 - \beta_2)g_{t,k}^2$. We have $\widehat{v}_{t,k} =  \beta_2^t\widehat{v}_{0,k}  +  \textstyle\sum_{i=1}^{t}{(1 - \beta_2)}\beta_2^{t-i}g^2_{i,k}$. Without loss of generality, we set $\widehat{v}_{0,k} = 0$. Hence, it holds that
\[
{v}_{t,k} = \widehat{v}_{t,k}/(1-\beta_{2}^{t}) = \sum_{i=1}^{t}\frac{(1-\beta_2)}{1-\beta_{2}^{t}}\beta_2^{t-i}g^2_{i,k}.
\] 
Then, the learning rate $\eta_{t,k}$ in Adam can be rewritten as 
\begin{align}\label{adam-learning-rate}
\begin{split}
 \eta_{t,k} 
& =  \frac{\eta }{\sqrt{t v_{t,k}}+t\epsilon} \\
& = \frac{\eta}{\sqrt{t\sum_{i=1}^{t}\frac{(1-\beta_2)}{1-\beta_2^{t}}\beta_2^{t-i}g^2_{i,k}}+t\epsilon} \\
& = \frac{\eta}{\sqrt{t\sum_{i=1}^{t}\frac{(1-\beta_2)\beta_2^{t}}{1-\beta_2^{t}}\beta_2^{-i}g^2_{i,k}}+t\epsilon}.
\end{split}
\end{align}
Let $a_i = \beta_2^{-i}$. Note that 
$ \sum_{i=1}^t \beta_2^{-i} = \frac{1-\beta_2^t}{(1-\beta_2)\beta_2^t}.$
Hence, Eq.~\eqref{adam-learning-rate} can be further reformulated as
\begin{align} 
\eta_{t,k}= \frac{\eta}{\sqrt{t\sum_{i=1}^{t}\frac{a_{i}}{\sum_{i=1}^{t}a_{i}}g^2_{i,k}} + \sqrt{t}\epsilon}.
\end{align}
For comparison, the adaptive learning rates of Adam and AdaUSM are summarized as follows:
\begin{equation*}
 \left\{
\begin{aligned}
&  \eta_{t,k}^{\tiny \rm  Adam} = {\eta}{\Big /}\left[\textstyle \sqrt{t\sum_{i=1}^{t}\frac{a_{i}}{\sum_{i=1}^{t}a_{i}}g^2_{i,k}}+\sqrt{t}\epsilon\right], \\
&  \eta_{t,k}^{\tiny \rm AdaUSM} ={\eta}{\Big /}\left[\textstyle\sqrt{t\sum_{i=1}^{t}\frac{a_{i}}{\sum_{i=1}^{t}a_{i}}g^2_{i,k}}+\epsilon\right].
\end{aligned}
\right.
\end{equation*} 
Hence, the adaptive learning rate in Adam is actually equivalent to that in AdaUSM by specifying $a_{i} = \beta_{2}^{-i}$ if $\epsilon$ is sufficiently small. For the parameter setting in Adam, it holds that  
\begin{align}
&\log\big(\sum_{i=1}^{T}a_{i}\big)= \log\big(\frac{1-\beta_2^{-T}}{1-\beta_2^{-1}}\big) \ge \log\big(\frac{\beta_2^{-T+1}}{\beta_2^{-1}-1}\big) \nonumber\\
&\quad \ge (T-1)\log(\frac{1}{1-\beta}) =  \mathcal{O}(T)  > o(T). \nonumber
\end{align}
Thus, we gain an insight for understanding the convergence of Adam from the convergence results of AdaUSM in Theorem \ref{AdaUSM-high-probility}.

\begin{remark}
Recently, Chen \textit{et al.} \cite{chen2018convergence} have also proposed AdaGrad with exponential moving average (AdaEMA) by setting $\beta_{2}$ as $(\beta_{2})_{t}= 1-{1}/{t}$ and removing the bias-correction steps in Adam. 
Its $\mathcal{O}({\log{(T)}}/{T})$ convergence rate in the non-convex stochastic setting has been established under a slightly stronger assumption that the stochastic gradient estimate is required to be uniformly bounded.
Compared with AdaEMA, AdaUSM not only adopts a general weighted sequence in estimating the adaptive learning rate, but also uses a different unified momentum that covers HB and NAG as special instances. 
The superiority of HB and NAG momentums over EMA has been pointed out in the above paragraph: {\bf Momentum estimation}. In Section \ref{experiment}, we also experimentally demonstrate the effectiveness of AdaUSM against AdaEMA.  
\end{remark}

\subsection{RMSProp}  
Coordinate-wise RMSProp is another efficient solver for training DNNs~\cite{hinton2012lecture,mukkamala2017variants}, which is defined as
\begin{equation*} 
\left\{
\begin{aligned}
& v_{t,k} = \beta v_{t-1,k} +  (1-\beta)g_{t,k}^2 \\
& x_{t,k} =  x_{t-1,k} - \eta g_{t,k}/(\sqrt{tv_{t,k}}+\epsilon)
\end{aligned}
\right., {\rm\ for\ } k = 1, \ldots, d.
\end{equation*}
Define $a_i = \beta^{-i}$. The adaptive learning rate of RMSProp denoted as $\eta_{t,k}^{\tiny \rm RMSProp}$ can be rewritten as 
\begin{align*}
 \eta_{t,k}^{\tiny \rm RMSProp} 
& =  \frac{\eta }{\sqrt{t v_{t,k}} + \epsilon} \\
& = \frac{\eta/(1 - \beta^{t})}{\sqrt{t\sum_{i=1}^{t}\frac{(1-\beta_2)}{1-\beta_2^{t}}\beta_2^{t-i}g^2_{i,k}} + \frac{\epsilon}{1 - \beta^{t}}} \\
& =\frac{\eta/(1 - \beta^{t})}{\textstyle\sqrt{t\sum_{i=1}^{t}\frac{a_{i}}{\sum_{i=1}^{t}a_{i}}g^2_{i,k}}+\frac{\epsilon}{1-\beta^{t}}}.
\end{align*}
When $\epsilon$ is a sufficiently small constant and $\beta <1$, it is obvious that $\eta_{t,k}^{\tiny \rm RMSProp}$ has a similar structure to $\eta_{t,k}^{\tiny \rm AdaUSM}$ after $t$ being sufficiently large. Based on the above analysis, AdaUSM can be interpreted as generalized RMSProp with HB and NAG momentums.

\begin{figure*}[htpb]
\vspace{-0.35cm}
\centering
\subfigure{\includegraphics[width=0.25\linewidth,height=0.25\linewidth]{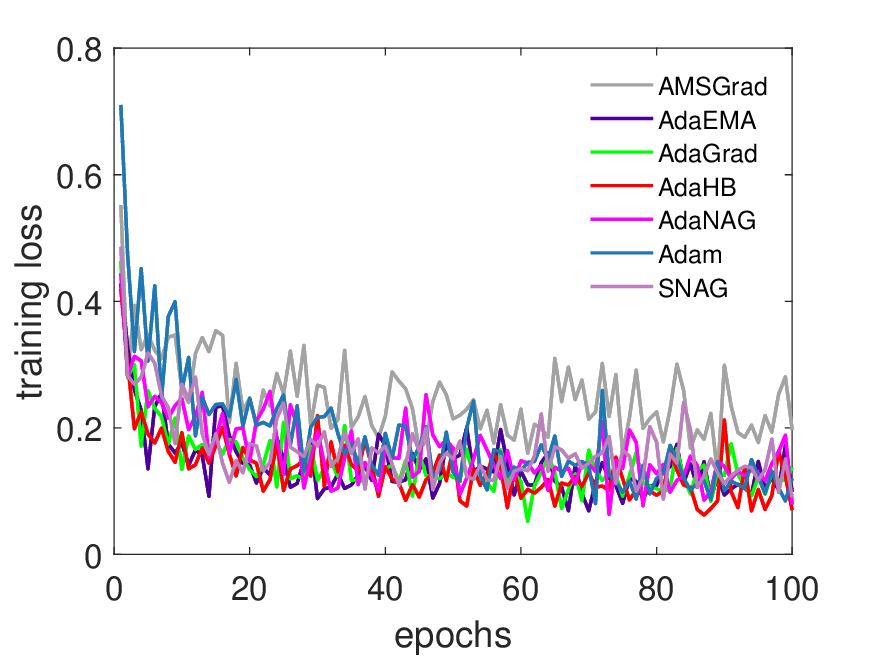}} \!\!\!\!\!\!  
\subfigure{\includegraphics[width=0.25\linewidth,height=0.25\linewidth]{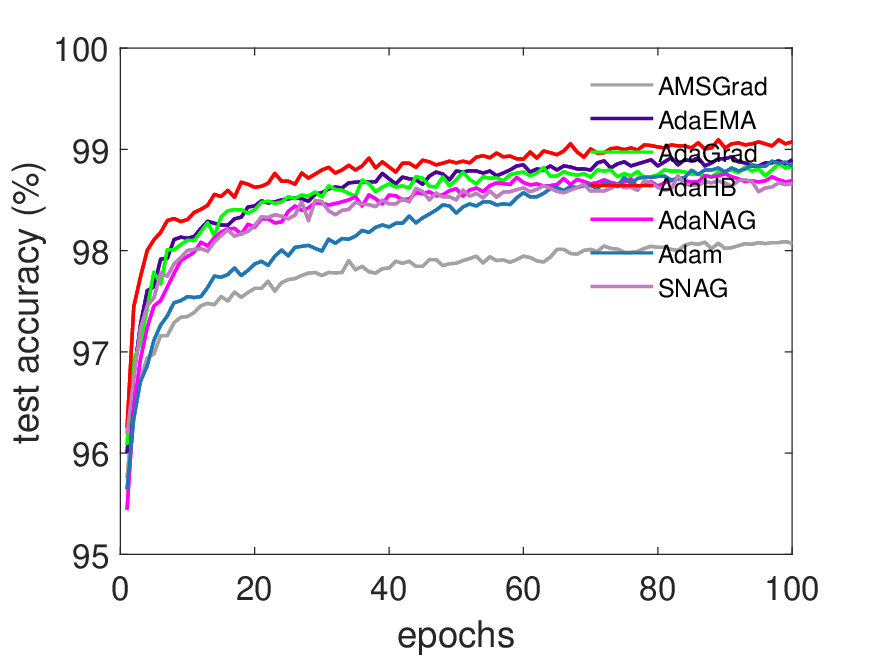}} \!\!\!\!\!\!  
\subfigure{\includegraphics[width=0.25\linewidth,height=0.25\linewidth]{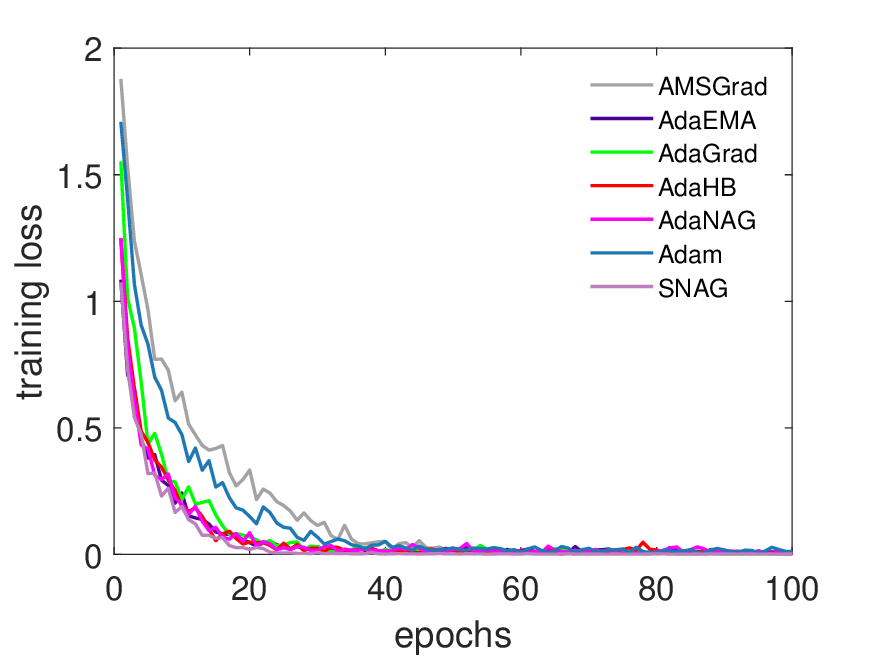}}\!\!\!\!\!\!  
\subfigure{\includegraphics[width=0.25\linewidth,height=0.25\linewidth]{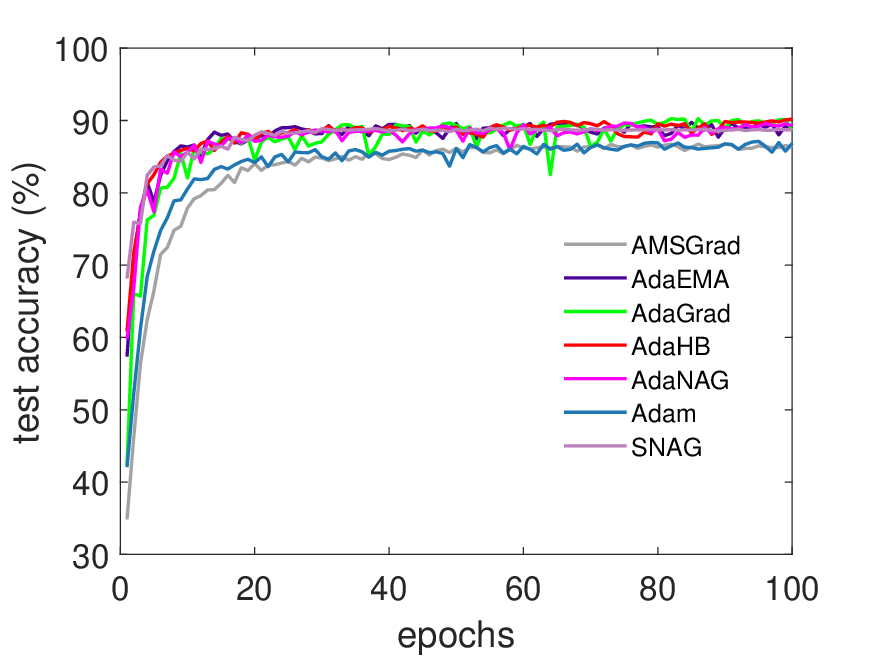}}\!\!\!\!\!\!   
\vspace{-0.35cm}
\caption{The first two and the last two figures illustrate the performance profiles of various optimizers for training LeNet on MNIST and training GoogLeNet on CIFAR10, respectively. 
}
\label{fig:LeNet}
\centering
\subfigure{\includegraphics[width=0.25\linewidth,height=0.25\linewidth]{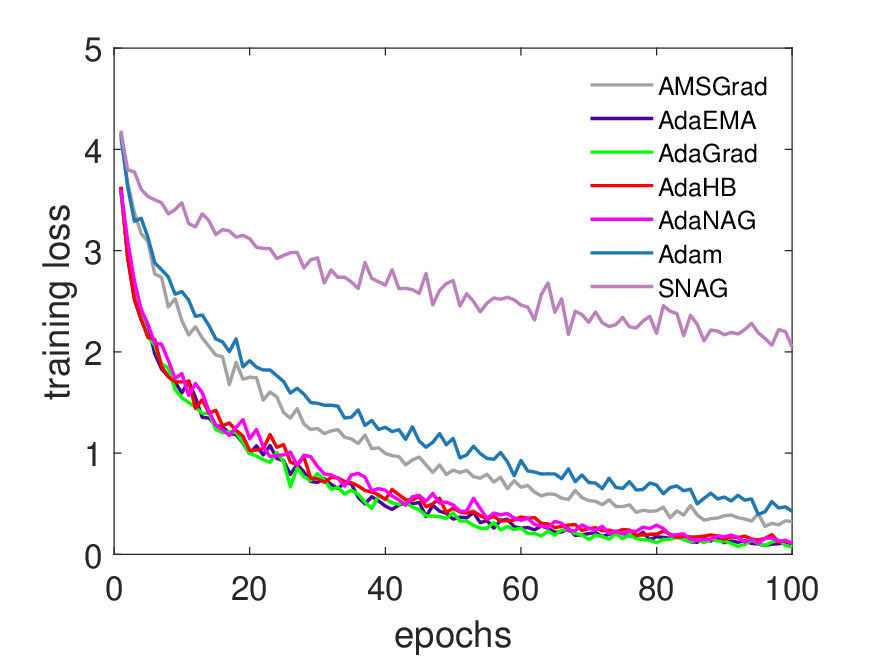}}\!\!\!\!\!\!  
\subfigure{\includegraphics[width=0.25\linewidth,height=0.25\linewidth]{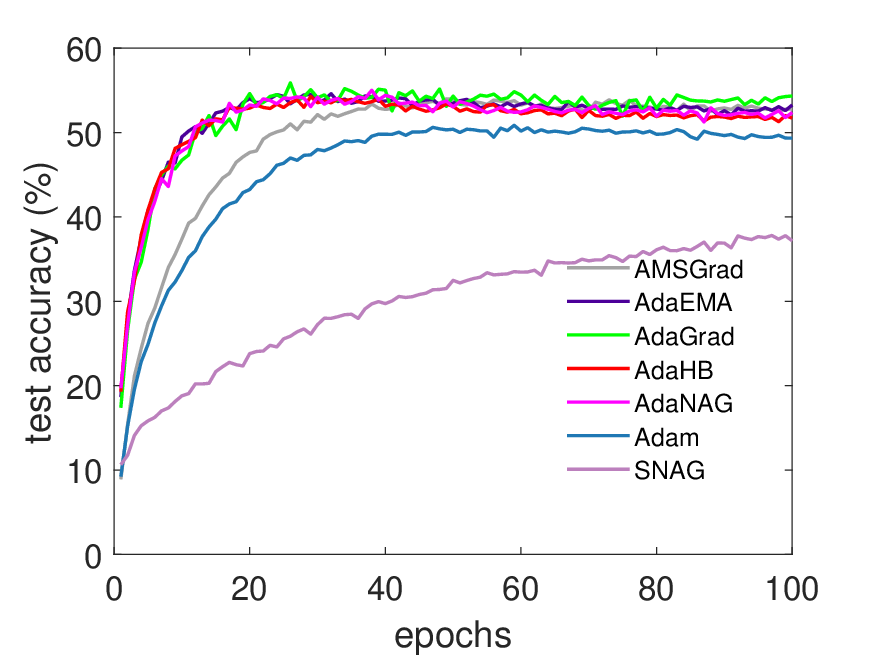}}\!\!\!\!\!\! 
\subfigure{\includegraphics[width=0.25\linewidth,height=0.25\linewidth]{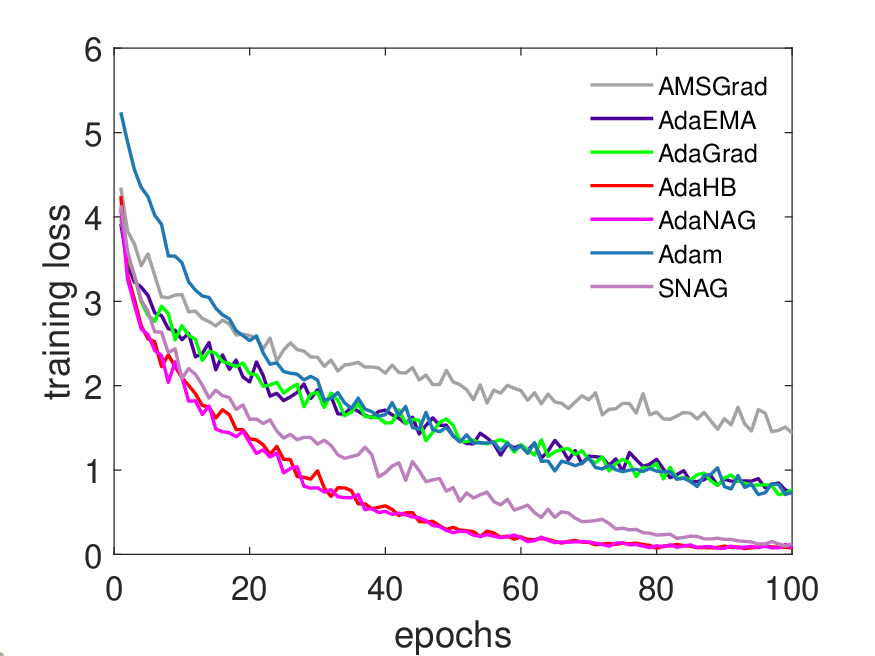}}\!\!\!\!\!\!  
\subfigure{\includegraphics[width=0.25\linewidth,height=0.25\linewidth]{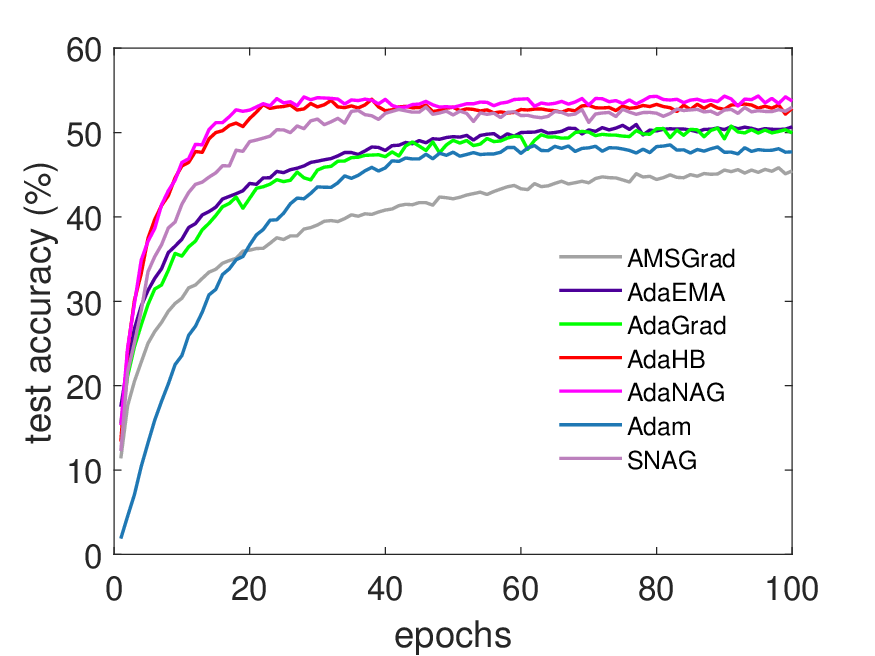}}\!\!\!\!\!\!  
\vspace{-0.35cm}
\caption{The first two and the last two figures illustrate the Performance profiles of various optimizers for training DenseNet on CIFAR100 and training ResNet on Tiny-ImageNet, respecgively. 
}
\label{fig:DenseNet}
\label{fig:ResNet}
\vspace{-0.35cm}
\end{figure*}

\section{Experiments}\label{experiment}

In this section, we conduct experiments to validate the efficacy and theory of AdaHB (AdaUSM with $\lambda = 0$) and AdaNAG (AdaUSM with $\lambda = 1$) by applying them to train DNNs\footnote{https://github.com/kuangliu/pytorch-cifar} including LeNet \cite{lecun1998gradient}, GoogLeNet \cite{szegedy2015going}, ResNet \cite{he2016deep}, and DenseNet \cite{huang2017densely} on various datasets including MNIST \cite{lecun1998gradient}, CIFAR10/100 \cite{krizhevsky2009learning}, and Tiny-ImageNet \cite{deng2009imagenet}. 
The efficacies of AdaHB and AdaNAG are evaluated in terms of the training loss and test accuracy v.s. epochs, respectively. In the experiments, we fix the batch-size as $128$ and the weighted decay parameter as $5 \times 10^{-4}$, respectively. 

{\bf Optimizers.}\quad~ We compare AdaHB/AdaNAG with five competing algorithms: SGD with momentum (SGD-momentum) \cite{sutskever2013importance}, AdaGrad \cite{duchi2011adaptive,ward2018adagrad}, AdaEMA \cite{chen2018convergence}, AMSGrad \cite{reddi2018convergence,chen2018convergence}, and Adam \cite{kingma2014adam}. The parameter settings of all compared optimizers are summarized in Table \ref{paramter-comparaed-1}.

\textit{{Because the main contribution is to give a theoretical analysis for the AdaUSM, we conduct experiments based on theoretical settings}. To match the convergence theory, we take the diminishing base learning rate as ${\eta}/{\sqrt{t}}$ uniformly across all the tested adaptive optimizers.}
Moreover, via the momentum estimation paragraph in Section \ref{Adam-momentum}, we know that the learning rates in AdaHB and AdaNAG will be ${1}/(1 - \beta_{1})$ times greater than those in AdaEMA, AMSGrad, and Adam if they share the same constant parameter $\eta$. 
In addition, too large and small $\eta$ would lead to heavy oscillation and bad stagnation on the training loss, respectively, which would deteriorate the performances of the tested optimizers. Consequently, the base learning rate $\eta$ for each solver is chosen via grid search on the set $\{1,\, 0.1,\, 0.01,\, 0.001,\, 0.0001\}$. We report the base learning rate of each solver that can consistently contribute to the best performance.

\begin{table}[H]
\caption{Parameter settings of AdaHB, AdaNAG, AdaGrad, SGD-momentum, AdaEMA, AMSGrad, and Adam.}
\label{paramter-comparaed-1}
\begin{center}
\begin{tabular}{|c|c|c|c|c|}
\hline
Name     & $\eta$ & $\beta_{1}$ & $\beta_2$ & $\epsilon$ \\ \hline
AdaEMA   & 0.01  &   0.9      & $1-1/t$   & 1.0e-8     \\ 
AMSGrad  & 0.01  &   0.9      & 0.999     & 1.0e-8     \\ 
Adam     & 0.01  &   0.9      & 0.999     & 1.0e-8     \\ 
\hline
\end{tabular}

\bigskip

\begin{tabular}{|c|c|c|c|c|}
\hline
Name   & $\eta$ & weights $a_{t}$ & $\mu$ &  $\epsilon $ \\ \hline
SGD-momentum   & 0.1  &  $\diagup$      &  0.9  &   $\diagup$  \\ 
AdaGrad& 0.01 &     1           &  0.9  &   1.0e-8     \\  
AdaHB  & 0.001 &     t           &  0.9  &   1.0e-8     \\
AdaNAG & 0.001 &     t           &  0.9  &   1.0e-8     \\
\hline
\end{tabular}
\end{center}
\end{table}

{\bf Experimental results.} We conduct numerical experiments to compare the performances of AdaHB, AdaNAG, SGD-momentum, AdaGrad, AdaEMA, AMSGrad, and Adam by applying them to training LeNet on MNIST, GoogLeNet on CIFAR10, DenseNet on CIFAR100, and ResNet on Tiny-ImageNet, respectively. The results are shown in Figures \ref{fig:LeNet}-\,\ref{fig:ResNet}.

The first two figures in Figure \ref{fig:LeNet} illustrates the performance of LeNet on MNIST which covers $60,000$ training examples and $10,000$ test examples. It can be seen that AdaHB decreases the training loss fastest among the seven tested  optimizers, which simultaneously yields a higher test accuracy than the other tested optimizers. The performances of AdaEMA and AdaGrad are worse than AdaHB but better than SGD-momentum, Adam, and AMSGrad.     

The last two figures in Figure \ref{fig:LeNet}  illustrate the performance of training GoogLeNet on CIFAR10 which covers $50,000$ training examples and $10,000$ test examples. 
It can be seen that SGD-momentum decreases the training loss slightly faster than other optimizers, followed by AdaHB and AdaNAG, and that AMSGrad and Adam are the slowest optimizers. 
The test accuracies of AdaHB, AdaNAG, AdaGrad, and AdaEMA are comparable, which are all slightly better than SGD-momentum and outperform Adam and AMSGrad. 

The first two figures in Figure \ref{fig:ResNet} illustrates the performance of training DenseNet on CIFAR100 which covers $50,000$ training examples and $10,000$ test examples. 
It shows that SGD-momentum has the worst training process and test accuracy, followed by Adam and AMSGrad. While AdaGrad, AdaEMA, AdaHB, and AdaNAG decrease the training loss and increase the test accuracy at roughly the same speed. 

The last two figures in Figure \ref{fig:ResNet} illustrate the performance of training ResNet on Tiny-ImageNet which contains $100,000 $ training examples and $10,000$ test examples. It can be seen that AdaHB and AdaNAG show the fastest speed to decrease the training loss and increase the test accuracy. SGD-momentum is worse than AdaHB and AdaNAG, and better than AdaGrad, AdaEMA, and AMSGrad in terms of the training loss and test accuracy. 

In summary, AdaHB and AdaNAG are more efficient and robust than SGD-momentum, AdaGrad, AdaEMA, Adam, and AMSGrad in terms of both the training speed and generalization capacity. SGD-momentum is also an efficient optimizer but it is highly sensitive to the hand-tuning learning rate. Moreover, the value of EMA is marginal and not as efficient as heavy ball and Nesterov accelerated gradient momentums, as revealed by the performance curves of AdaEMA, AdaGrad, AdaHB, and AdaNAG. 

\section{Conclusions}

We integrated a novel weighted coordinate-wise AdaGrad with unified momentum including heavy ball and Nesterov accelerated gradient momentums, yielding a new adaptive stochastic algorithm called AdaUSM. 
Its $\mathcal{O}({\log(T)}/{\sqrt{T}})$ convergence rate was established in the non-convex stochastic setting. Our work largely extends the convergence rate of accelerated AdaGrad in \cite{levy2018online} to the general non-convex stochastic setting.
Moreover, we pointed out that the adaptive learning rates of Adam and RMSProp are essentially special cases of the weighted adaptive learning rate in AdaUSM, which provides a new angle to understand the convergences of Adam/RMSProp. 
We also experimentally verified the efficacy of AdaUSM in training deep learning models on several image datasets. 
The promising results show that the proposed AdaUSM is more effective and robust than SGD with momentum, AdaGrad, AdaEMA, AMSGrad, and Adam in terms of the training loss and test accuracy v.s. epochs.  

\nocite{szegedy2015going}
\nocite{nguyen2017sarah}
\nocite{zhuang2019surrogate}
\nocite{agarwal2019efficient}
\nocite{liu2019variance}
\nocite{kavis2019unixgrad}
\nocite{chen2019zo}

\bibliography{bibtex}
\bibliographystyle{IEEEtran}

\clearpage
\appendix 
\section{Detailed Proof}
In this section, we give a complete proof of Theorem \ref{AdaUSM-high-probility}.
The section is arranged as follows. In Section \ref{Preliminary lemmas} we provide preliminary lemmas that will be used to establish Theorem \ref{AdaUSM-high-probility}. In Section \ref{main proof} we give the detailed proof of Theorem \ref{AdaUSM-high-probility}. 

\subsection{Preliminary Lemmas}\label{Preliminary lemmas}
First, in this section we provide preliminary lemmas that will be used to prove our main theorem. The readers may skip this part for the first time and come back whenever the lemmas are needed.
\begin{lemma}\label{lem1}
Let $S_t = S_0 + \sum_{i=1}^t a_i$, where $\{a_t\}$ is a non-negative sequence and $S_0 > 0$. We have 
$
\sum_{t=1}^T \frac{a_t}{S_t} \leq \log(S_T) - \log(S_0).
$
\end{lemma}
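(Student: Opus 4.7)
The plan is to exploit the telescoping structure that is hidden in the ratio $a_t/S_t$. Since $S_t - S_{t-1} = a_t$ by definition, I can rewrite each summand as
$$\frac{a_t}{S_t} = \frac{S_t - S_{t-1}}{S_t} = 1 - \frac{S_{t-1}}{S_t}.$$
The key elementary inequality I would use is $1 - x \leq -\log(x)$ for all $x > 0$ (equivalently $\log(x) \leq x - 1$). Applied with $x = S_{t-1}/S_t$, which is positive because $S_0 > 0$ and all $a_t \geq 0$ ensure $S_t > 0$, this gives
$$\frac{a_t}{S_t} \leq -\log\!\left(\frac{S_{t-1}}{S_t}\right) = \log(S_t) - \log(S_{t-1}).$$

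The second step is to sum this bound from $t=1$ to $T$; the right-hand side telescopes and yields $\log(S_T) - \log(S_0)$, which is exactly the claim. An equivalent route, in case one prefers a picture-based justification of the single-term inequality, is to notice that $1/s$ is decreasing, so
$$\frac{a_t}{S_t} = \int_{S_{t-1}}^{S_t} \frac{ds}{S_t} \leq \int_{S_{t-1}}^{S_t} \frac{ds}{s} = \log(S_t) - \log(S_{t-1}),$$
after which the same telescoping finishes the argument.

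There is no real obstacle here; the only point that needs a brief sanity check is that $S_{t-1} > 0$ for all $t \geq 1$, which is immediate from $S_0 > 0$ and non-negativity of $\{a_t\}$. This ensures the logarithms and the division $S_{t-1}/S_t$ are well-defined, and validates the use of $\log x \leq x-1$ on the positive reals.
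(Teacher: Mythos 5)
Your proof is correct and essentially matches the paper's: the paper bounds the sum $\sum_{t=1}^T \frac{1}{S_t}(S_t - S_{t-1})$ by the integral $\int_{S_0}^{S_T} \frac{dx}{x}$ using monotonicity of $1/x$, which is exactly your second ("picture-based") route, and your primary route via $1-x \leq -\log(x)$ is just a term-by-term restatement of the same comparison. Nothing is missing.
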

\begin{proof}
The finite sum $\sum_{t=1}^T \frac{a_t}{S_t}$ can be interpreted as a Riemann sum as follows $\sum_{t=1}^T \frac{1}{S_t}(S_t - S_{t-1}).$
Since $1/x$ is decreasing on the interval $(0, \infty)$, we have 
$$\sum_{t=1}^T \frac{1}{S_t}(S_t-S_{t-1}) \leq \int_{S_0}^{S_T} \frac{1}{x} d x = \log(S_T) - \log(S_0).$$
The proof is finished.
\end{proof}

The following lemma is a direct result of the momentum updating rule.
\begin{lemma}\label{lem2}
Suppose $m_t = \mu m_{t-1} - \eta_t g_t$ with $m_0 = \bm{0}$ and $0\leq \mu < 1$. We have the following estimate
\begin{equation}
\sum_{t=1}^T \norm{m_t}^2 \leq \frac{1}{(1-\mu)^2} \sum_{t=1}^T\norm{\eta_t g_t}^2.
\end{equation}
\end{lemma}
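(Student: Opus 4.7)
The plan is to unroll the momentum recursion, apply Cauchy--Schwarz (or Jensen) to bound $\|m_t\|^2$ by a weighted average of past $\|\eta_i g_i\|^2$, and then swap the order of summation to get a geometric series.

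First I would iterate the recursion $m_t = \mu m_{t-1} - \eta_t g_t$ starting from $m_0 = \bm{0}$ to obtain the closed form
\[
m_t = -\sum_{i=1}^t \mu^{t-i}\, \eta_i g_i.
\]
Next, I would view $m_t$ as a weighted sum with weights $\mu^{t-i}$ and apply Cauchy--Schwarz in the form $\bigl\|\sum_i \alpha_i v_i\bigr\|^2 \le \bigl(\sum_i \alpha_i\bigr)\bigl(\sum_i \alpha_i \|v_i\|^2\bigr)$ with $\alpha_i = \mu^{t-i} \ge 0$. Since $\sum_{i=1}^t \mu^{t-i} \le \sum_{j=0}^{\infty}\mu^j = 1/(1-\mu)$, this yields
\[
\|m_t\|^2 \;\le\; \frac{1}{1-\mu} \sum_{i=1}^t \mu^{t-i}\, \|\eta_i g_i\|^2.
\]

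Finally, I would sum the above over $t = 1,\ldots,T$ and interchange the order of summation:
\[
\sum_{t=1}^T \|m_t\|^2 \;\le\; \frac{1}{1-\mu}\sum_{t=1}^T\sum_{i=1}^t \mu^{t-i}\|\eta_i g_i\|^2 \;=\; \frac{1}{1-\mu}\sum_{i=1}^T \|\eta_i g_i\|^2 \sum_{t=i}^{T}\mu^{t-i}.
\]
Bounding the inner sum by $\sum_{j=0}^{\infty}\mu^{j} = 1/(1-\mu)$ produces the claimed factor $1/(1-\mu)^2$ and completes the proof. There is no real obstacle here; the only point requiring a little care is applying Cauchy--Schwarz with the correct normalization so that the two powers of $(1-\mu)^{-1}$ (one from each geometric sum) combine to give the stated constant.
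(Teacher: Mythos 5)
Your proof is correct. You unroll the recursion to $m_t = -\sum_{i=1}^t \mu^{t-i}\eta_i g_i$, apply the weighted Cauchy--Schwarz inequality to peel off one factor of $1/(1-\mu)$, and then swap the order of summation to collect the second factor from the tail geometric series; every step checks out, including the normalization issue you flag at the end. The paper reaches the same bound by a different decomposition: it never unrolls the recursion, but instead writes $m_t = \mu m_{t-1} + (1-\mu)\bigl(-\eta_t g_t/(1-\mu)\bigr)$ and uses convexity of $\norm{\cdot}^2$ to get the one-step bound $\norm{m_t}^2 \le \mu\norm{m_{t-1}}^2 + \norm{\eta_t g_t}^2/(1-\mu)$, then sums over $t$ and absorbs the $\mu\sum_t\norm{m_t}^2$ term back into the left-hand side. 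The paper's route is slightly more economical (no double sum, no closed form for $m_t$), while yours makes the per-iterate bound $\norm{m_t}^2 \le \frac{1}{1-\mu}\sum_{i=1}^t \mu^{t-i}\norm{\eta_i g_i}^2$ explicit, which is exactly the kind of estimate the paper needs elsewhere (compare the double-sum manipulations in the proof of Lemma \ref{lem7}), so either argument would serve.
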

\begin{proof}
First, we have the following inequality due to convexity of $\norm{\cdot}^2$:
\begin{equation}\label{eq3016}
\begin{aligned}
    \norm{m_t}^2 &= \norm{\mu m_{t-1} + (1-\mu)(-\eta_t g_t/(1-\mu))}^2 \\
    &\leq \mu\norm{m_{t-1}}^2 + (1-\mu)\norm{\eta_t g_t/(1-\mu)}^2 \\
    &= \mu \norm{m_{t-1}}^2 + \norm{\eta_t g_t}^2/(1-\mu).
\end{aligned}
\end{equation}
Taking sum of Eq.~\eqref{eq3016} from $t=1$ to $t=T$ and using $m_0 = \bm{0}$, we have that
\begin{equation}
\begin{split}
\sum_{t=1}^T \norm{m_t}^2 
&\leq \mu\sum_{t=1}^{T-1} \norm{m_t}^2 + \frac{1}{1-\mu}\sum_{t=1}^T \norm{\eta_t g_t}^2 \\
&\leq \mu\sum_{t=1}^{T} \norm{m_t}^2 + \frac{1}{1-\mu}\sum_{t=1}^T \norm{\eta_t g_t}^2. 
\end{split}
\end{equation}
Hence,
\begin{equation}
    \sum_{t=1}^T \norm{m_t}^2 \leq \frac{1}{(1-\mu)^2}\sum_{t=1}^T \norm{\eta_t g_t}^2.
\end{equation}
The proof is finished.
\end{proof}

The following lemma is a result of the USM formulation for any general adaptive learning rate.
\begin{lemma}\label{lem3}
Let $\{x_t\}$ and $\{m_t\}$ be sequences generated by the following general SGD with USM momentum:
starting from initial values $x_1$ and $m_0 = 0$, and being updated through
\begin{equation*}
\left\{
\begin{aligned}
m_t &= \mu m_{t-1} - \eta_t g_t, \\
x_{t+1} &= x_t + m_t + \lambda \mu (m_t - m_{t-1}),
\end{aligned}
\right.
\end{equation*}
where the momentum factor $\mu$ and the interpolation factor $\lambda$ satisfy $0 \leq \mu < 1$ and $0 \leq \lambda \leq 1/(1-\mu)$, respectively. 
Suppose that the function $f$ is $L$-smooth. Then for any $t \geq 2$ we have the following estimate
\begin{equation}\label{2-007}
\begin{split}
\langle \nabla f(x_t), m_t \rangle \leq \mu \langle \nabla f(x_{t-1}), m_{t-1} \rangle + 
(1+\frac{3}{2}\lambda \mu) \mu L \norm{m_{t-1}}^2 \\ 
+ \frac{1}{2}\lambda \mu^2 L \norm{m_{t-2}}^2 - \langle \nabla f(x_t), \eta_t g_t \rangle.
\end{split}
\end{equation}
In particular, the following estimate holds
\begin{align}\label{2-008}
\langle \nabla f(x_t), m_t \rangle 
&\leq (1+ 2\lambda)L \sum_{i=1}^{t-1} \norm{m_i}^2 \mu^{t-i} \nonumber\\
&- 
\sum_{i=1}^t \langle \nabla f(x_i), \eta_i g_i \rangle \mu^{t-i}.
\end{align}
\end{lemma}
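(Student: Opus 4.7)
The plan is to prove the one-step recursion first, then iterate to telescope it into the second estimate. Both pieces use only $L$-smoothness, the triangle inequality, Cauchy--Schwarz, and the scalar Young inequality $ab \leq \tfrac{1}{2}(a^2+b^2)$.

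For the one-step recursion, I would start from $m_t = \mu m_{t-1} - \eta_t g_t$, which gives
\[
\langle \nabla f(x_t), m_t\rangle = \mu \langle \nabla f(x_t), m_{t-1}\rangle - \langle \nabla f(x_t), \eta_t g_t\rangle,
\]
and then swap $\nabla f(x_t)$ for $\nabla f(x_{t-1})$ in the first term at controlled cost. The USM update gives $x_t - x_{t-1} = m_{t-1} + \lambda\mu(m_{t-1}-m_{t-2})$, so $L$-smoothness and the triangle inequality yield $\|\nabla f(x_t) - \nabla f(x_{t-1})\| \leq L[(1+\lambda\mu)\|m_{t-1}\| + \lambda\mu\|m_{t-2}\|]$. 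Pairing this with Cauchy--Schwarz against $m_{t-1}$ and splitting the cross term via Young gives $\langle \nabla f(x_t) - \nabla f(x_{t-1}), m_{t-1}\rangle \leq (1 + \tfrac{3}{2}\lambda\mu)L\|m_{t-1}\|^2 + \tfrac{1}{2}\lambda\mu L\|m_{t-2}\|^2$. Multiplying through by $\mu$ and substituting back into the identity above produces the claimed per-step bound.

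For the telescoped estimate, I would view the per-step bound as the scalar recurrence $\Phi_t \leq \mu\Phi_{t-1} + B_t - C_t$, where $\Phi_t := \langle \nabla f(x_t), m_t\rangle$, $C_i := \langle \nabla f(x_i), \eta_i g_i\rangle$, and $B_i := (1+\tfrac{3}{2}\lambda\mu)\mu L\|m_{i-1}\|^2 + \tfrac{1}{2}\lambda\mu^2 L\|m_{i-2}\|^2$. The base case $\Phi_1 = -C_1$ follows from $m_0 = \bm{0}$ and $m_1 = -\eta_1 g_1$. Unrolling gives
\[
\Phi_t \leq -\sum_{i=1}^t \mu^{t-i} C_i + \sum_{i=2}^t \mu^{t-i} B_i.
\]
Reindexing the two pieces of $\sum_i \mu^{t-i} B_i$ via $j = i-1$ and $j = i-2$ respectively (with the $m_0$ contribution vanishing) collapses them into weighted sums of $\mu^{t-j}\|m_j\|^2$ over ranges contained in $\{1,\dots,t-1\}$. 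The total coefficient in front of each $\mu^{t-j}\|m_j\|^2$ is at most $(1 + \tfrac{3}{2}\lambda\mu + \tfrac{\lambda}{2})L$, which is bounded by $(1+2\lambda)L$ using $\mu < 1$. This delivers the stated estimate.

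The argument contains no deep estimate beyond smoothness and Young; the only mildly delicate point is bookkeeping --- tracking the two different index shifts after unrolling $B_i$, correctly absorbing the $m_0 = \bm{0}$ boundary term, and squeezing the coefficient $\tfrac{3}{2}\lambda\mu + \tfrac{\lambda}{2}$ down to $2\lambda$ via $\mu < 1$. I expect this bookkeeping to be the main obstacle.
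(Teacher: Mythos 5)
Your proposal is correct and follows essentially the same route as the paper: the same decomposition of $\langle \nabla f(x_t), m_t\rangle$, the same $L$-smoothness plus Cauchy--Schwarz plus Young bound on the gradient-difference term, and the same unrolling with the coefficient $\bigl(1+\tfrac{3}{2}\lambda\mu+\tfrac{1}{2}\lambda\bigr)L \leq (1+2\lambda)L$. The only cosmetic difference is that the paper divides the recursion by $\mu^t$ before summing (handling $\mu=0$ separately), whereas you unroll it directly, which avoids that case split.
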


\begin{proof}
Since $m_t = \mu m_{t-1} - \eta_t g_t$, we have
\begin{equation}\label{2-009}
\begin{split}
&\langle \nabla f(x_t), m_t \rangle \\
& = \mu \langle \nabla f(x_t), m_{t-1} \rangle - \langle \nabla f(x_t), \eta_t g_t \rangle \\
& = \mu \langle \nabla f(x_{t-1}), m_{t-1} \rangle + \mu \langle \nabla f(x_t) - \nabla f(x_{t-1}), m_{t-1} \rangle \\
& \quad - \langle \nabla f(x_t), \eta_t g_t \rangle.
\end{split}
\end{equation}
Note that by $L$-smoothness of function $f$, we have that
\begin{equation}\label{2-010}
\begin{split}
\norm{\nabla f(x_t) - \nabla f(x_{t-1})} 
& \leq L \norm{x_t - x_{t-1}} \\
&= L\norm{m_{t-1} + \lambda \mu (m_{t-1} - m_{t-2})} \\
& \leq (1+\lambda \mu) L \norm{m_{t-1}} + \lambda \mu L \norm{m_{t-2}}.
\end{split}
\end{equation}
Hence, by Cauchy-Schwartz inequality and Eq.~\eqref{2-010}, we have that
\begin{equation}\label{2-011}
\begin{split}
&\langle \nabla f(x_t) - \nabla f(x_{t-1}), m_{t-1} \rangle\\
& \leq \norm{\nabla f(x_t) - \nabla f(x_{t-1})} \norm{m_{t-1}} \\
& \leq (1 + \lambda \mu) L \norm{m_{t-1}}^2 + \lambda \mu L \norm{m_{t-2}}\norm{m_{t-1}} \\
& \leq (1 + \frac{3}{2} \lambda \mu) L \norm{m_{t-1}}^2 + \frac{1}{2}\lambda \mu L \norm{m_{t-2}}^2.
\end{split}
\end{equation}
Combining Eq.~\eqref{2-009} and Eq.~\eqref{2-011}, we obtain the desired inequality in Eq.~\eqref{2-007}.

To obtain the second estimate, let $B_t = \langle \nabla f(x_t), m_t \rangle$. If $\mu = 0$, the equality holds trivially. Otherwise $0 < \mu < 1$. We divide $\mu^t$ from both sides of Eq.~\eqref{2-007} and obtain
\begin{align}\label{2-012}
\frac{B_t}{\mu^t} 
&\leq \frac{B_{t-1}}{\mu^{t-1}} 
+ (1+\frac{3}{2}\lambda \mu) L \frac{\norm{m_{t-1}}^2}{\mu^{t-1}} \nonumber \\ 
&+ \frac{1}{2}\lambda L \frac{\norm{m_{t-2}}^2}{\mu^{t-2}} 
- \langle \nabla f(x_t), \eta_t g_t \rangle \mu^{-t}.
\end{align}
Note that $m_0 = \bm{0}$, and $B_1 = -\langle \nabla f(x_1), \eta_1 g_1\rangle$. Therefore, 
\begin{equation}\label{2-013}
\begin{split}
\frac{B_t}{\mu^t} 
& \leq \frac{B_1}{\mu} + (1 + \frac{3}{2}\lambda \mu)L \sum_{i=2}^{t} \norm{m_{i-1}}^2 \mu^{-(i-1)} \\
& \quad + \frac{1}{2} \lambda L \sum_{i=2}^{t} \norm{m_{i-2}}^2 \mu^{-(i-2)} - \sum_{i=2}^t \langle \nabla f(x_i), \eta_i g_i\rangle \mu^{-i} \\
& \leq (1+2\lambda) L \sum_{i=1}^{t-1} \norm{m_i}^2 \mu^{-i} 
  - \sum_{i=1}^t \langle \nabla f(x_i), \eta_i g_i \rangle \mu^{-i}.
\end{split}
\end{equation}
Multiplying both sides of Eq.~\eqref{2-013} by $\mu^t$, we obtain the desired estimate Eq.~\eqref{2-008}. The proof is completed. 
\end{proof}

The following two lemmas, which are first introduced in \cite[Theorem 10]{ward2018adagrad}, are particularly due to the AdaGrad adaptive learning rate. 
Here we adjust their proofs to the coordinate-wise version for our Weighed AdaGrad adaptive learning rate and represent it here for readers' convenience.

\begin{lemma}\label{lem4}
Let $\sigma_t = \sqrt{\mathbb{E}_t g_t^2}$ and let 
$$\hat{\eta}_t =\frac{\eta}{\sqrt{\frac{1}{\bar{a}_t}\left(\sum_{i=1}^{t-1} a_i g_i^2 + a_t\sigma_t^2\right)}+\bm{\epsilon}}.$$ 
Assume that the noisy gradients $g_t$ satisfy assumptions (\textbf{A1}) and (\textbf{A2}). Then we have the following estimate 
\begin{align}
&-\mathbb{E}_t\langle \nabla f(x_t), \eta_t g_t \rangle \\
&\leq -\frac{1}{2} \norm{\nabla f(x_t)}^2_{\hat{\eta}_t} + \frac{2\sigma}{\eta} \mathbb{E}_t \left[\sqrt{(a_t/\bar{a}_t)}\norm{\eta_t g_t}^2\right], 
\end{align}
where $\norm{\nabla f(x_t)}^2_{\hat{\eta}_t} = \sum_{k=1}^d \hat{\eta}_{t, k} |\nabla_k f(x_t)|^2$, and the constant $\sigma$ is defined in assumption (\textbf{A2}).
\end{lemma}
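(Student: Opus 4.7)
The strategy is to adapt the surrogate--step-size trick from the AdaGrad analysis of \cite{ward2018adagrad} to the coordinate-wise weighted setting. The fundamental obstacle is that $\eta_{t,k}$ depends on $g_{t,k}$ through the accumulator $v_{t,k}$, so $\mathbb{E}_t[\eta_t g_t]\ne\eta_t\nabla f(x_t)$ in general; the purpose of the surrogate $\hat{\eta}_t$, which replaces the current $g_{t,k}^2$ in the accumulator by its conditional second moment $\sigma_{t,k}^2=\mathbb{E}_t g_{t,k}^2$, is to produce a step size that is $\mathcal{F}_{t-1}$-measurable and can therefore be pulled out of $\mathbb{E}_t[\cdot]$.

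I would begin from the decomposition
\[
-\mathbb{E}_t\langle \nabla f(x_t),\eta_t g_t\rangle
=-\mathbb{E}_t\langle \nabla f(x_t),\hat{\eta}_t g_t\rangle
+\mathbb{E}_t\langle \nabla f(x_t),(\hat{\eta}_t-\eta_t)g_t\rangle.
\]
By \textbf{(A1)} and the independence of $\hat{\eta}_t$ from $g_t$, the first summand simplifies to $-\|\nabla f(x_t)\|_{\hat{\eta}_t}^2$. The remaining task is to bound the cross term in absolute value by $\tfrac{1}{2}\|\nabla f(x_t)\|_{\hat{\eta}_t}^2$ plus a term of the desired form.

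The heart of the proof is the coordinate-wise estimate
\[
|\hat{\eta}_{t,k}-\eta_{t,k}|
\le\frac{\eta_{t,k}\hat{\eta}_{t,k}}{\eta}\sqrt{\tfrac{a_t}{\bar{a}_t}\,|g_{t,k}^2-\sigma_{t,k}^2|},
\]
which I would derive by applying $|\sqrt{u}-\sqrt{v}|\le\sqrt{|u-v|}$ to the two accumulators inside the denominators of $\eta_{t,k}$ and $\hat{\eta}_{t,k}$ together with the relations $\eta_{t,k}/\eta=1/(\sqrt{u}+\epsilon)$ and $\hat{\eta}_{t,k}/\eta=1/(\sqrt{v}+\epsilon)$. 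Inserting this into the $k$-th summand of the cross term and applying Cauchy--Schwarz to the product $\eta_{t,k}|g_{t,k}|\cdot\sqrt{|g_{t,k}^2-\sigma_{t,k}^2|}$ yields
\[
\mathbb{E}_t[|\hat{\eta}_{t,k}-\eta_{t,k}||g_{t,k}|]
\le\frac{\hat{\eta}_{t,k}\sqrt{a_t/\bar{a}_t}}{\eta}\sqrt{\mathbb{E}_t[\eta_{t,k}^2 g_{t,k}^2]}\cdot\sqrt{\mathbb{E}_t|g_{t,k}^2-\sigma_{t,k}^2|},
\]
and the last factor is at most $\sqrt{2}\,\sigma_{t,k}$ by the triangle inequality combined with $\mathbb{E}_t g_{t,k}^2=\sigma_{t,k}^2$.

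To close, I would multiply by $|\nabla_k f(x_t)|$ and apply Young's inequality in the balanced form $ab\le(a^2+b^2)/2$ with $a=\sqrt{\hat{\eta}_{t,k}}|\nabla_k f(x_t)|$ and $b$ equal to the remaining factor; this splits off $\tfrac{1}{2}\hat{\eta}_{t,k}|\nabla_k f(x_t)|^2$ and leaves a residual proportional to $\hat{\eta}_{t,k}\sigma_{t,k}^2(a_t/\bar{a}_t)\mathbb{E}_t[\eta_{t,k}^2 g_{t,k}^2]/\eta^2$. The decisive move is then the elementary bound $\hat{\eta}_{t,k}\sigma_{t,k}\le\eta\sqrt{\bar{a}_t/a_t}$, which holds because the denominator of $\hat{\eta}_{t,k}$ exceeds $\sqrt{a_t/\bar{a}_t}\,\sigma_{t,k}$; combined with the coordinate-wise consequence $\sigma_{t,k}\le\sigma$ of \textbf{(A2)}, this converts the residual, after summing over $k$, into $\tfrac{\sigma\sqrt{a_t/\bar{a}_t}}{\eta}\mathbb{E}_t\|\eta_t g_t\|^2$, giving the claimed inequality with room to spare in the numerical constant. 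The main obstacle is precisely this interplay between $\sigma_{t,k}$ and $\sqrt{a_t/\bar{a}_t}$: invoking Young's inequality before Cauchy--Schwarz would produce an $\mathbb{E}_t[|g_{t,k}^2-\sigma_{t,k}^2|g_{t,k}^2]$ term whose only available pointwise bound via $\eta_{t,k}^2 g_{t,k}^2\le\eta^2\bar{a}_t/a_t$ cannot be recast in terms of $\mathbb{E}_t\|\eta_t g_t\|^2$, so using Cauchy--Schwarz first is what preserves the correct scaling of the right-hand side.
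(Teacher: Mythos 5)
Your proof is correct, and it follows the same overall architecture as the paper's: the decomposition $-\mathbb{E}_t\langle \nabla f(x_t),\eta_t g_t\rangle=-\|\nabla f(x_t)\|_{\hat{\eta}_t}^2+\mathbb{E}_t\langle \nabla f(x_t),(\hat{\eta}_t-\eta_t)g_t\rangle$, the surrogate step size $\hat{\eta}_t$ pulled out of the conditional expectation, and the key bound $\hat{\eta}_{t,k}\sigma_{t,k}\le\eta\sqrt{\bar{a}_t/a_t}$ are all identical. Where you genuinely diverge is in the treatment of the cross term. The paper bounds $|\hat{\eta}_{t,k}-\eta_{t,k}|$ pointwise by $\tfrac{\sqrt{a_t/\bar{a}_t}}{\eta}\hat{\eta}_{t,k}\eta_{t,k}(|g_{t,k}|+\sigma_{t,k})$, using the factorization $|g_{t,k}^2-\sigma_{t,k}^2|=\bigl||g_{t,k}|-\sigma_{t,k}\bigr|(|g_{t,k}|+\sigma_{t,k})$ cancelled against $\sqrt{V_{t,k}}+\sqrt{\hat{V}_{t,k}}$; this produces two summands, each absorbed by a separate AM--GM step contributing $\tfrac14\hat{\eta}_{t,k}|\nabla_k f(x_t)|^2$, and the paper must treat $\sigma_{t,k}=0$ as a special case because one of its AM--GM choices divides by $\sigma_{t,k}$. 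You instead use $|\sqrt{u}-\sqrt{v}|\le\sqrt{|u-v|}$, then Cauchy--Schwarz inside $\mathbb{E}_t$ together with $\mathbb{E}_t|g_{t,k}^2-\sigma_{t,k}^2|\le 2\sigma_{t,k}^2$, and a single Young step. This is slightly cleaner: it needs no case split at $\sigma_{t,k}=0$ (your residual carries an explicit $\sigma_{t,k}^2$ factor), and it yields the constant $\sigma/\eta$ in place of the paper's $2\sigma/\eta$, which only strengthens the downstream bounds. Your closing remark about the order of Cauchy--Schwarz and Young is a fair observation about your own route, though the paper's pointwise factorization shows it is not the only way to avoid the problematic $\mathbb{E}_t[|g_{t,k}^2-\sigma_{t,k}^2|g_{t,k}^2]$ term.
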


\begin{proof}
We follow the same idea of the proof of Theorem 10 in \cite{ward2018adagrad}. First, we have
\begin{equation}\label{equ2-020}
-\langle \nabla f(x_t), \eta_t g_t \rangle = -\langle \nabla f(x_t), \hat{\eta}_t g_t \rangle 
+ \langle\nabla f(x_t), (\hat{\eta}_t - \eta_t)g_t \rangle.
\end{equation}
Note that $\hat{\eta}_t$ is independent of $g_t$, and $\mathbb{E}_t g_t = \nabla f(x_t)$ by assumption (\textbf{A1}). Hence, 
\begin{equation}
\mathbb{E}_t \langle \nabla f(x_t), \hat{\eta}_t g_t \rangle 
= \langle \nabla f(x_t), \hat{\eta}_t \nabla f(x_t) \rangle 
= \norm{\nabla f(x_t)}_{\hat{\eta}_t}^2.
\end{equation}
Taking the conditional expectation of Eq.~\eqref{equ2-020} with respect to $\xi_t$ while $\xi_1, \ldots, \xi_{t-1}$ being fixed, we have
\begin{equation}\label{equ2-022}
-\mathbb{E}_t \langle \nabla f(x_t), \eta_t g_t \rangle = - \norm{\nabla f(x_t)}^2_{\hat{\eta}_t} 
+ \mathbb{E}_t \langle \nabla f(x_t), (\hat{\eta}_t - \eta_t)g_t \rangle.
\end{equation}
To estimate the second term of Eq.~\eqref{equ2-022}, we first have
\begin{equation}\label{equ2-023}
\langle \nabla f(x_t), (\hat{\eta}_t-\eta_t)g_t \rangle
\leq \sum_{k=1}^d |\hat{\eta}_{t,k}-\eta_{t,k}||\nabla_k f(x_t)||g_{t,k}|.
\end{equation}
Let $V_t = \frac{1}{\bar{a}_t}\sum_{i=1}^t a_i g_i^2$, and $\hat{V}_t =  \frac{1}{\bar{a}_t}\left(\sum_{i=1}^{t-1} a_i g_i^2 + a_t \sigma_t^2\right)$. Then $\eta_t = \eta/(\sqrt{V_t} + \bm{\epsilon})$ and $\hat{\eta}_t = \eta/(\sqrt{\hat{V}_t} + \bm{\epsilon})$. It follows that 
\begin{equation}\label{equ2-024}
\begin{split}
|\hat{\eta}_t - \eta_t|
& = \frac{\eta |V_t -\hat{V}_t|}{(\sqrt{V_t}+\bm{\epsilon})(\sqrt{\hat{V}_t}+\bm{\epsilon})(\sqrt{V_t} + \sqrt{\hat{V}_t})} \\
& \leq \frac{\eta_t \hat{\eta_t}}{\eta} \frac{(a_t/\bar{a}_t)|g_t^2 - \sigma_t^2|}{\sqrt{V_t} + \sqrt{\hat{V}_t}}
\leq \frac{\sqrt{(a_t/\bar{a}_t)}}{\eta} \hat{\eta}_t \eta_t (|g_t|+\sigma_t).
\end{split}
\end{equation}
Note that the above inequality is coordinate-wise. 
By Eq.~\eqref{equ2-023} and Eq.~\eqref{equ2-024}, we have
\begin{equation}\label{equ8}
\begin{split}
&\langle \nabla f(x_t), (\hat{\eta}_t-\eta_t)g_t \rangle\\
\leq & \sum_{k=1}^d \frac{\sqrt{(a_t/\bar{a}_t)}}{\eta}\hat{\eta}_{t,k} \eta_{t,k}|\nabla_k f(x_t)| |g_{t,k}|^2 \\
&+ \sum_{k=1}^d \frac{\sqrt{(a_t/\bar{a}_t)}}{\eta}\sigma_{t,k}\hat{\eta}_{t,k}\eta_{t,k}|\nabla_k f(x_t)||g_{t,k}|.
\end{split}
\end{equation}
We claim that 
\begin{equation}\label{equ5-027}
\begin{split}
&\mathbb{E}_t \sum_{k=1}^d \frac{\sqrt{(a_t/\bar{a}_t)}}{\eta}\hat{\eta}_{t,k}\eta_{t,k}|\nabla_k f(x_t)||g_{t,k}|^2 \\
\leq\ & \frac{1}{4}\norm{\nabla f(x_t)}^2_{\hat{\eta}_t} + \frac{\sigma}{\eta}\mathbb{E}_t \left[\sqrt{(a_t/\bar{a}_t)}\norm{\eta_t g_t}^2\right].
\end{split}
\end{equation}
To see this, first, if $\sigma_{t,k} > 0$,  we apply the arithmetic inequality $2ab \leq a^2 + b^2$ with 
\begin{align}
a = \frac{1}{2\sigma_{t,k}}\sqrt{\hat{\eta}_{t,k}}|\nabla_k f(x_t)||g_{t,k}| \\
b = \frac{\sqrt{(a_t/\bar{a}_t)}}{\eta}\sigma_{t,k}\sqrt{\hat{\eta}_{t,k}}\eta_{t,k}|g_{t,k}|    
\end{align}
to the left-hand side of Eq.~\eqref{equ5-027}, arriving at
\begin{equation}\label{equ13}
\begin{split}
& \frac{\sqrt{(a_t/\bar{a}_t)}}{\eta}\hat{\eta}_{t,k}\eta_{t,k}|\nabla_k f(x_t)||g_{t,k}|^2 \\
\leq\ & \frac{|g_{t,k}|^2}{4\sigma_{t,k}^2}\hat{\eta}_{t,k}|\nabla_k f(x_t)|^2 
+ \frac{(a_t/\bar{a}_t)}{\eta^2}\sigma_{t,k}^2\hat{\eta}_{t,k}|\eta_{t,k}g_{t,k}|^2.
\end{split}
\end{equation}
Note that $\mathbb{E}_t |g_{t,k}|^2 = \sigma_{t,k}$, 
\[\hat{\eta}_{t,k} = \frac{\eta}{\sqrt{\frac{1}{\bar{a}_t}(\sum_{i=1}^{t-1} a_i g_i^2 + a_t \sigma_{t,k}^2)} + \bm{\epsilon}} \leq \frac{\eta}{\sqrt{(a_t/\bar{a}_t)}\sigma_{t,k}},\]
and $\sigma_{t,k} \leq \sqrt{\mathbb{E}_t\norm{g_t}^2} \leq \sigma$ by assumption (\textbf{A2}). Therefore,
\begin{equation}\label{equ2-027}
\begin{split}
&\mathbb{E}_t \left[ \frac{\sqrt{(a_t/\bar{a}_t)}}{\eta}\hat{\eta}_{t,k}\eta_{t,k}|\nabla_k f(x_t)||g_{t,k}|^2 \right]\\
\leq\ & \frac{1}{4} \hat{\eta}_{t,k}|\nabla_k f(x_t)|^2 
+ \frac{1}{\eta} \mathbb{E}_t \left[\sigma_{t,k}\sqrt{(a_t/\bar{a}_t)}|\eta_{t,k} g_{t,k}|^2\right] \\
\leq\ & \frac{1}{4} \hat{\eta}_{t,k} |\nabla_k f(x_t)|^2 + \frac{\sigma}{\eta}\mathbb{E}_t \left[\sqrt{(a_t/\bar{a}_t)}|\eta_{t,k} g_{t,k}|^2\right].
\end{split}
\end{equation}
On the other hand, if $\sigma_{t,k} = 0$, then $g_{t,k} = 0$, and Eq.~\eqref{equ2-027} holds automatically. By taking sum of the components 
for $k= 1, 2, \ldots, d$, we then obtain the desired claim in Eq.~\eqref{equ5-027}.

Similarly, we apply the arithmetic inequality with 
\begin{align}
a = \frac{1}{2}\sqrt{\hat{\eta}_{t,k}}|\nabla_k f(x_t)|\\
b = \frac{\sqrt{(a_t/\bar{a}_t)}}{\eta} \sigma_{t,k}\sqrt{\hat{\eta}_{t,k}}\eta_{t,k}|g_{t,k}|
\end{align}
to the second term of Eq.~\eqref{equ8}, arriving at
\begin{equation}
\begin{split}
&\frac{\sigma_{t,k}}{\eta}\hat{\eta}_{t,k}\eta_{t,k}|\nabla_k f(x_t)||g_{t,k}| \\ 
\leq\ & \frac{1}{4}\hat{\eta}_{t,k}|\nabla_k f(x_t)|^2 
+ \frac{(a_t/\bar{a}_t)}{\eta^2}\sigma_{t,k}^2\hat{\eta}_{t,k}|\eta_{t,k}g_{t,k}|^2 \\
\leq\ & \frac{1}{4}\hat{\eta}_{t,k}|\nabla_k f(x_t)|^2 + \frac{\sigma}{\eta}\sqrt{(a_t/\bar{a}_t)}|\eta_{t,k}g_{t,k}|^2.
\end{split}
\end{equation}
Hence,
\begin{align}\label{equ2-029}
&\mathbb{E}_t \sum_{k=1}^d \frac{\sigma}{\eta}\hat{\eta}_{t,k}|\nabla_k f(x_t)||g_{t,k}| \nonumber\\
&\leq \frac{1}{4}\norm{\nabla f(x_t)}_{\hat{\eta}_t} + \frac{\sigma}{\eta} \mathbb{E}_t \left[\sqrt{(a_t/\bar{a}_t)}\norm{\eta_t g_t}^2\right].
\end{align}
Combining Eq.~\eqref{equ8}, Eq.~\eqref{equ2-027}, and Eq.~\eqref{equ2-029}, we obtain the following estimate 
\begin{align}\label{equ2-030}
&\mathbb{E}_t \langle \nabla f(x_t), (\hat{\eta}_t-\eta_t)g_t\rangle \nonumber\\
&\leq \frac{1}{2}\norm{\nabla f(x_t)}^2_{\hat{\eta}_t} + \frac{2\sigma}{\eta} \mathbb{E}_t \left[\sqrt{(a_t/\bar{a}_t)}\norm{\eta_t g_t}^2\right].
\end{align}
The proof is finished by taking the estimate Eq.~\eqref{equ2-030} into Eq.~\eqref{equ2-022}.
\end{proof}

\begin{lemma}\label{lem5}
Assume that the noisy gradient $g_t$ in each iteration satisfies assumptions (\textbf{A1}) and (\textbf{A2}). We have the following estimate
\begin{equation}
\mathbb{E}\sum_{t=1}^T(a_t/\bar{a}_t)\norm{\eta_t g_t}^2 \leq \eta^2 d\ \log\left(1 + \frac{\sigma^2}{\epsilon}\sum_{t=1}^T a_t\right).
\end{equation}
\end{lemma}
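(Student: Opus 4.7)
The plan is to reduce the left-hand side to a coordinate-wise telescoping sum that can be handled by Lemma \ref{lem1}. First I would expand $\norm{\eta_t g_t}^2 = \sum_{k=1}^d \eta_{t,k}^2 g_{t,k}^2$ and use the inequality $(\sqrt{x}+\epsilon)^2 \geq x + \epsilon^2$ with $x = v_{t,k}/\bar{a}_t$ to get
\[
(a_t/\bar{a}_t)\,\eta_{t,k}^2 g_{t,k}^2 \;\leq\; \frac{\eta^2 (a_t/\bar{a}_t)\, g_{t,k}^2}{v_{t,k}/\bar{a}_t + \epsilon^2} \;=\; \frac{\eta^2\, a_t g_{t,k}^2}{v_{t,k} + \epsilon^2 \bar{a}_t}.
\]
This converts the adaptive step-size into a clean rational form where the numerator equals the increment $v_{t,k} - v_{t-1,k}$.

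Next I would eliminate the $\bar{a}_t$ in the denominator. Since the theorem assumes $\{a_t\}$ is non-decreasing, $\bar{a}_t = \tfrac{1}{t}\sum_{i=1}^t a_i \geq a_1$, so the above upper bound is further dominated by $\eta^2 a_t g_{t,k}^2/(v_{t,k} + \epsilon^2 a_1)$. Setting $S_0 = \epsilon^2 a_1$ and $S_t = S_0 + \sum_{i=1}^t a_i g_{i,k}^2 = \epsilon^2 a_1 + v_{t,k}$, so that $S_t - S_{t-1} = a_t g_{t,k}^2$, Lemma \ref{lem1} applied pathwise yields
\[
\sum_{t=1}^T \frac{a_t g_{t,k}^2}{\epsilon^2 a_1 + v_{t,k}} \;\leq\; \log\!\left(1 + \frac{v_{T,k}}{\epsilon^2 a_1}\right).
\]

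Finally, I would sum over $k$, take expectation, and push the expectation inside the logarithm via Jensen's inequality (concavity of $\log$); assumption (\textbf{A2}) gives $\sum_{k=1}^d \mathbb{E}[v_{T,k}] = \sum_{t=1}^T a_t\,\mathbb{E}\norm{g_t}^2 \leq \sigma^2 \sum_{t=1}^T a_t$. Combining:
\[
\mathbb{E}\sum_{t=1}^T (a_t/\bar{a}_t)\norm{\eta_t g_t}^2 \;\leq\; \eta^2 d\,\log\!\left(1 + \frac{\sigma^2}{d\,\epsilon^2 a_1}\sum_{t=1}^T a_t\right),
\]
which matches the stated estimate up to absorbing the $a_1$ and $d$ constants (I read the $\epsilon$ in the denominator of the lemma as a typo for $\epsilon^2$, consistent with the $\tfrac{\sigma^2}{\epsilon^2}$ appearing in Theorem \ref{AdaUSM-high-probility}).

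The main obstacles are conceptual rather than computational: getting rid of the coupling between $\eta_{t,k}$ and $\bar{a}_t$ in the denominator so that Lemma \ref{lem1} applies cleanly (handled by the $(\sqrt{x}+\epsilon)^2 \geq x + \epsilon^2$ trick together with non-decreasingness of $\{a_t\}$), and confirming that the order of the log-concavity argument is valid once the coordinate-wise bound is in place. Everything else is a direct application of Lemma \ref{lem1} and assumption (\textbf{A2}).
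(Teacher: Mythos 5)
Your proposal is correct and follows essentially the same route as the paper's proof: the bound $(\sqrt{x}+\epsilon)^2 \ge x + \epsilon^2$, the reduction $\bar{a}_t \ge a_1$ (the paper normalizes $a_1 = 1$), a coordinate-wise application of Lemma~\ref{lem1}, and Jensen's inequality with assumption (\textbf{A2}). Your extra averaging over coordinates yields a marginally tighter constant ($d$ inside the logarithm), and your reading of the $\epsilon$ in the statement as a typo for $\epsilon^2$ matches the paper's own proof and Theorem~\ref{AdaUSM-high-probility}.
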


\begin{proof}
Let $V_t = \frac{1}{\bar{a}_i}\sum_{i=1}^t a_t g_i^2$. Then $\eta_t = \eta / (\sqrt{V_t} + \bm{\epsilon})$. We have that
\begin{equation}
\begin{aligned}
\sum_{t=1}^T (a_t/\bar{a}_t)\norm{\eta_t g_t}^2 
&= \eta^2 \sum_{t=1}^T\sum_{k=1}^d \frac{(a_t/\bar{a}_t)g_{t,k}^2}{(\sqrt{V_{t,k}} + \epsilon)^2}\\
&\leq \eta^2 \sum_{t=1}^T \sum_{k=1}^d \frac{(a_t/\bar{a}_t)g_{t,k}^2}{\epsilon^2 + V_{t,k}}\\
&=\eta^2 \sum_{k=1}^d\sum_{t=1}^T \frac{a_t g_{t,k}^2}{\epsilon^2 \bar{a}_t +\sum_{i=1}^t a_i g_{i,k}^2}\\
&\leq \eta^2 \sum_{k=1}^d\sum_{t=1}^T \frac{a_t g_{t,k}^2}{\epsilon^2 + \sum_{i=1}^t a_i g_{i,k}^2}.
\end{aligned}
\end{equation}
The last inequality is due to $\bar{a}_t \geq a_1 = 1$. By Lemma \ref{lem1}, we have
$$
\sum_{t=1}^T \frac{a_t g_{t,k}^2}{\epsilon^2 + \sum_{i=1}^t a_i g_{i,k}^2} \leq \log\left(\epsilon^2 + \sum_{i=1}^T a_i g_{i,k}^2\right) - \log(\epsilon^2).
$$
On the other hand, since $\log(x)$ is concave, we have
\begin{align}
\mathbb{E}\left[\log\left(\epsilon^2 + \sum_{i=1}^T a_i g_i^2 \right)\right] 
&\leq \log\left(\mathbb{E}\left[\epsilon^2 + \sum_{i=1}^T a_i g_i^2\right]\right) \\
& \leq \log\left(\epsilon^2 + \sigma^2 \sum_{i=1}^T a_i\right).
\end{align}
Hence,
\begin{equation}
\begin{split}
&\mathbb{E}\sum_{t=1}^T \left[(a_t/\bar{a}_t)\norm{\eta_t g_t}^2 \right] \\
\leq & \eta^2 \sum_{k=1}^d \left(\log\left(\epsilon^2 + \sigma^2 \sum_{t=1}^T a_t\right) -\log(\epsilon^2)\right) \\
\leq & \eta^2 d\ \log\left(1 + \frac{\sigma^2}{\epsilon^2}\sum_{t=1}^T a_t\right).
\end{split}
\end{equation}
\end{proof}

\begin{lemma}\label{lem6}
Let $\hat{\eta}_t$ be defined as in Lemma \ref{lem3}. Assume that the noisy gradients $g_t$ satisfy assumptions (\textbf{A1}) and (\textbf{A2}). Let $\tau$ be randomly selected from $\{1, 2,\ldots T\}$ with equal probability $\mathcal{P}(\tau = t) = 1/T$. We have the following estimate
\begin{equation}
\left(\mathbb{E}\norm{f(x_\tau)}^{4/3}\right)^{3/2} 
\leq \frac{\sqrt{2\epsilon^2 +2\sigma^2T}}{\eta T}\ \mathbb{E}\sum_{t=1}^T\norm{\nabla f(x_t)}^2_{\hat{\eta}_t}.
\end{equation}
\end{lemma}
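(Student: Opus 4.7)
The plan is to prove this via a double application of H\"older's inequality, combined with a dimension-free bound on $\max_k\hat{V}_{t,k}$ in expectation. Throughout, I read the left-hand side as $(\mathbb{E}\norm{\nabla f(x_\tau)}^{4/3})^{3/2}$ (the $\nabla$ appears to be a typo). The first step is to pass from the Euclidean norm to the weighted norm. Introducing $Y_t:=\max_k\hat\eta_{t,k}^{-1}=(\min_k\hat\eta_{t,k})^{-1}$, I have coordinate-wise
\[
\norm{\nabla f(x_t)}^2=\sum_{k=1}^d|\nabla_k f(x_t)|^2\le Y_t\sum_{k=1}^d\hat\eta_{t,k}|\nabla_k f(x_t)|^2=Y_t\norm{\nabla f(x_t)}^2_{\hat\eta_t},
\]
so that $\norm{\nabla f(x_t)}^{4/3}\le Y_t^{2/3}(\norm{\nabla f(x_t)}^2_{\hat\eta_t})^{2/3}$.

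Next I would sum over $t$ and apply H\"older's inequality with conjugate exponents $p=3/2$ and $q=3$:
\[
\sum_{t=1}^T\norm{\nabla f(x_t)}^{4/3}\le\Bigl(\sum_{t=1}^T\norm{\nabla f(x_t)}^2_{\hat\eta_t}\Bigr)^{2/3}\Bigl(\sum_{t=1}^TY_t^2\Bigr)^{1/3}.
\]
A second H\"older (now over the probability space, with the same exponents) applied to the product of $(\sum_t\norm{\nabla f(x_t)}^2_{\hat\eta_t})^{2/3}$ and $(\sum_tY_t^2)^{1/3}$ yields
\[
\mathbb{E}\sum_{t=1}^T\norm{\nabla f(x_t)}^{4/3}\le\Bigl(\mathbb{E}\sum_{t=1}^T\norm{\nabla f(x_t)}^2_{\hat\eta_t}\Bigr)^{2/3}\Bigl(\mathbb{E}\sum_{t=1}^TY_t^2\Bigr)^{1/3}.
\]

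The crux is a dimension-free estimate for $\mathbb{E}\sum_tY_t^2$. Using $(a+b)^2\le 2a^2+2b^2$, I get $Y_t^2\le 2(\max_k\hat V_{t,k}+\epsilon^2)/\eta^2$. The key observation is that, since the coefficients $a_i$ and $\sigma_{t,k}^2$ are non-negative, the max of the sum in $\hat V_{t,k}$ is dominated by the sum of coordinate maxima, and each coordinate maximum is in turn dominated by the full Euclidean norm:
\[
\max_k g_{i,k}^2\le\sum_{k=1}^d g_{i,k}^2=\norm{g_i}^2,\qquad \max_k\sigma_{t,k}^2\le\mathbb{E}_t\norm{g_t}^2\le\sigma^2.
\]
Combining these with assumption~(\textbf{A2}) gives $\mathbb{E}[\max_k\hat V_{t,k}]\le\sigma^2 \sum_{i=1}^t a_i/\bar a_t=\sigma^2 t\le\sigma^2T$, hence $\mathbb{E}\sum_{t=1}^TY_t^2\le 2T(\epsilon^2+\sigma^2T)/\eta^2$, with no $d$ factor.

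The main obstacle is precisely this dimension-free control: a naive union bound on $\max_k\hat V_{t,k}$ would cost a factor of $d$, and the saving hinges on the elementary coordinate-vs-Euclidean comparison above together with~(\textbf{A2}). Once this bound is in hand, substituting back, using $\mathbb{E}\norm{\nabla f(x_\tau)}^{4/3}=\tfrac1T\sum_t\mathbb{E}\norm{\nabla f(x_t)}^{4/3}$, and raising both sides to the power $3/2$ simplifies the prefactor to $\sqrt{2T(\epsilon^2+\sigma^2 T)}/(\eta T^{3/2})=\sqrt{2\epsilon^2+2\sigma^2T}/(\eta T)$, yielding the claimed inequality.
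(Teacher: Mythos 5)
Your proof is correct and reaches the exact same constant, but it is organized differently from the paper's argument. You compare the Euclidean norm to the weighted norm via $Y_t=(\min_k\hat\eta_{t,k})^{-1}$ and then apply H\"older twice --- once over the time index and once over the probability space --- before taking the $3/2$ power at the very end; the paper instead fixes $t$, applies H\"older a single time over the probability space with the splitting $X=\bigl(\norm{\nabla f(x_t)}^2/(\epsilon+\sqrt{\norm{\hat V_t}_1})\bigr)^{2/3}$, $Y=(\epsilon+\sqrt{\norm{\hat V_t}_1})^{2/3}$, obtains the per-iterate bound $(\mathbb{E}\norm{\nabla f(x_t)}^{4/3})^{3/2}\le\frac{\sqrt{2\epsilon^2+2\sigma^2T}}{\eta}\mathbb{E}\norm{\nabla f(x_t)}^2_{\hat\eta_t}$, and then passes to the uniformly sampled $\tau$ using convexity of $x\mapsto x^{3/2}$. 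Your extra H\"older over $t$ plays the role of that convexity step, and both routes are equally tight. One remark on emphasis: you present the dimension-free control of $\mathbb{E}\sum_tY_t^2$ via $\max_k\hat V_{t,k}\le\sum_k\hat V_{t,k}$ as the crux, but the paper never faces this issue, because it aggregates coordinates with the $\ell_1$ norm from the outset --- $\mathbb{E}\norm{\hat V_t}_1=\sum_{i\le t}(a_i/\bar a_t)\mathbb{E}\norm{g_i}^2\le\sigma^2 t$ follows directly from (\textbf{A2}) --- so the ``naive union bound costs a factor of $d$'' concern is a feature of your particular ($\max$-based) normalization rather than an intrinsic obstacle. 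Your weighted-norm comparison $\norm{\nabla f(x_t)}^2\le Y_t\norm{\nabla f(x_t)}^2_{\hat\eta_t}$ is the exact counterpart of the paper's coordinate-wise inequality $\sqrt{\norm{\hat V_t}_1}\ge\sqrt{\hat V_{t,k}}$, so the two proofs are close cousins; yours is self-contained and equally valid.
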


\begin{proof}
We follow the proof from \cite{ward2018adagrad} with modification for the coordinate-wise case. Let $\hat{V}_t = \frac{1}{\bar{a}_t}\left(\sum_{i=1}^{t-1} g_i^2 + a_t\sigma_t^2\right)$, where $\sigma_t = \sqrt{\mathbb{E}_t g_t^2}$, we have $\hat{\eta}_{t,k} = \eta/(\sqrt{\hat{V}_{t,k}} + {\epsilon})$. 
By H\"{o}lder's inequality we have $\mathbb{E}|XY| \leq (\mathbb{E}|X|^p)^{1/p}(\mathbb{E}|Y|^q)^{1/q}$ for any $0 < p,\ q < 1$ with $1/p+1/q =1$. 
Now taking $p= 3/2$ and $q=3$, and
$$X = \left(\frac{\norm{\nabla f(x_t)}^2}{\epsilon + \sqrt{\norm{\hat{V}_t}}_1}\right)^{2/3},\quad Y = \left(\epsilon + \sqrt{\norm{\hat{V}_t}_1}\right)^{2/3},$$
we have
\begin{equation*}
\mathbb{E} \norm{\nabla f(x_t)}^{4/3} \leq \left(\mathbb{E} \frac{\norm{\nabla f(x_t)}^2}{\epsilon + \sqrt{\norm{\hat{V}_t}_1}}\right)^{\frac{2}{3}}
\left(\mathbb{E}\left[\left(\epsilon + \sqrt{\norm{\hat{V}_t}_1}\right)^2\right]\right)^{\frac{1}{3}}.
\end{equation*}
Namely,
\begin{equation}\label{equ3-050}
\left(\mathbb{E}\norm{\nabla f(x_t)}^{4/3}\right)^{3/2} \leq \left(\mathbb{E}\frac{\norm{\nabla f(x_t)}^2}{\epsilon + \sqrt{\norm{\hat{V}_t}_1}}\right)\left(\mathbb{E}\left[\left(\epsilon+\norm{\hat{V}_t}_1\right)^2\right]\right)^{1/2}.
\end{equation}
Note that 
\begin{equation}\label{equ3-051}
\begin{split}
\frac{\norm{\nabla f(x_t)}^2}{\epsilon + \sqrt{\norm{\hat{V}_t}_1}} 
 = & \sum_{k=1}^d \frac{|\nabla_k f(x_t)|^2}{\epsilon + \sqrt{\norm{\hat{V}_t}_1}}
 \leq \frac{1}{\eta} \sum_{k=1}^d \frac{\eta|\nabla_k f(x_t)|^2}{\epsilon + \sqrt{\hat{V}_{t,k}}} \\
  &= \frac{1}{\eta} \sum_{k=1}^d \hat{\eta}_{t,k} |\nabla_k f(x_t)|^2
= \frac{1}{\eta} \norm{\nabla f(x_t)}_{\hat{\eta}_t}^2.
\end{split}
\end{equation}
On the other hand, for any $t \leq T$ we have
\begin{equation}\label{equ3-052}
\begin{split}
& \mathbb{E}\left[\left(\epsilon + \sqrt{\norm{\hat{V}_t}_1} \right)^2\right] \\
&\leq \mathbb{E}\left[2\left(\epsilon^2 + \norm{\hat{V}_t}_1\right)\right]\\
&= 2\epsilon^2 + 2\mathbb{E} \sum_{k=1}^d \frac{1}{\bar{a}_t}\left(\sum_{i=1}^{t-1} a_i g_{i,k}^2 + a_t \sigma_{t,k}^2\right) \\
&= 2\epsilon^2 + 2\sum_{i=1}^t (a_i/\bar{a}_t)\mathbb{E}\norm{g_i}^2 \\
&\leq 2\epsilon^2 + 2\sigma^2 t \leq 2\epsilon^2 + 2\sigma^2 T.
\end{split}
\end{equation}
Hence, by Eq.~\eqref{equ3-050}, Eq.~\eqref{equ3-051}, and Eq.~\eqref{equ3-052}, we have 
\begin{equation}
\left(\mathbb{E}\norm{\nabla f(x_t)}^{4/3} \right)^{3/2} 
\leq \frac{\sqrt{2\epsilon^2 + 2\sigma^2 T}}{\eta}\ \mathbb{E}\norm{\nabla f(x_t)}_{\hat{\eta}_t}^2,\ \forall t \leq T.
\end{equation}
It follows that
\begin{equation}
\begin{split}
\left(\mathbb{E}\norm{f(x_\tau)}^{4/3}\right)^{3/2}
&= \left(\frac{1}{T}\sum_{t=1}^T\mathbb{E} \norm{\nabla f(x_t)}^{4/3}\right)^{3/2} \\
&\leq \frac{1}{T}\sum_{t=1}^T\left(\mathbb{E} \norm{\nabla f(x_t)}^{4/3}\right)^{3/2} \\
&\leq \frac{\sqrt{ 2\epsilon^2 + 2\sigma^2 T}}{\eta T} \mathbb{E}\sum_{t=1}^T \norm{\nabla f(x_t)}_{\hat{\eta}_t}^2.
\end{split}
\end{equation}
The proof is completed.
\end{proof}

\subsection{Proof of Theorem \ref{AdaGrad-high-probility}}\label{main proof}
In this section, we prove our main theorem \ref{AdaGrad-high-probility}. We restate the theorem for readers' convenience.
\begin{theorem*}
Let $\{x_t\} \subseteq \mathbb{R}^{d}$ be a sequence generated by AdaUSM. 
Assume that the noisy gradient $g_{t}$ in each iteration satisfies assumptions (\textbf{A1}) and (\textbf{A2}).
Suppose that the sequence of weights $\{a_t\}$  is non-decreasing.
Let $\tau$ be randomly selected from $\{1, 2,\ldots T\}$ with equal probability $\mathcal{P}(\tau = t) = 1/T$. Then we have the following estimate
\begin{equation*}
\big(\mathbb{E}\left[\norm{\nabla f(x_t)}^{4/3}\right]\big)^{3/2} \leq Bound(T) = \mathcal{O}\left(\redt{d}{\log T}/{\sqrt{T}}\right),
\end{equation*}
where 
$$Bound(T) = \frac{\sqrt{2\epsilon^2 + 2\sigma^2 T}}{T} \big(C_1 + C_2\log\big(1 + \frac{\sigma^2}{\epsilon^2}\sum_{t=1}^T a_t\big)\big)$$
with $C_1 = \frac{2(f(x_1) -f^*)}{(1+\lambda\mu)\eta}$ and $C_2 = \left(\frac{2\eta(1+2\lambda)^2 L d}{(1+\lambda\mu)(1-\mu)^3}+\frac{4\sigma d}{1-\mu}\right)$.
\end{theorem*}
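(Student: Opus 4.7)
The plan is to stitch together the preliminary lemmas via the standard descent-lemma route outlined in the sketch. I would begin from the $L$-smoothness inequality
\[
f(x_{t+1}) \leq f(x_t) + \langle \nabla f(x_t), m_t + \lambda\mu(m_t-m_{t-1})\rangle + \tfrac{L}{2}\norm{m_t + \lambda\mu(m_t-m_{t-1})}^2,
\]
telescope from $t=1$ to $T$ against $f(x_{T+1}) \geq f^*$, split the inner-product term into $\langle \nabla f(x_t), m_t\rangle + \lambda\mu\langle \nabla f(x_t), m_t - m_{t-1}\rangle$, and bound the quadratic term by a triangle inequality into coefficients times $\norm{m_t}^2$ and $\norm{m_{t-1}}^2$.

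The central issue is the first-order term $\langle \nabla f(x_t), m_t\rangle$, since $m_t$ is not an unbiased estimate of $\nabla f(x_t)$. I would invoke Lemma \ref{lem3} to unroll it into a geometric-in-$\mu$ combination of $\norm{m_i}^2$ (coefficient $(1+2\lambda)L$) and $\langle \nabla f(x_i), \eta_i g_i\rangle$ (coefficient $-1$) for $i\leq t$. Summing over $t$ and swapping summations with $\sum_{t>i}\mu^{t-i}\leq 1/(1-\mu)$ yields an aggregate bound; a similar treatment handles the telescoping difference. Next I would replace $\sum\norm{m_i}^2$ by $(1-\mu)^{-2}\sum\norm{\eta_i g_i}^2$ via Lemma \ref{lem2}, and pass to conditional expectation, at which point Lemma \ref{lem4} converts $-\mathbb{E}_i\langle \nabla f(x_i),\eta_i g_i\rangle$ into $-\tfrac12\norm{\nabla f(x_i)}_{\hat\eta_i}^2 + \tfrac{2\sigma}{\eta}\mathbb{E}_i[\sqrt{a_i/\bar a_i}\norm{\eta_i g_i}^2]$.

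Because $\{a_t\}$ is non-decreasing, $a_t \geq \bar a_t$, hence $\sqrt{a_t/\bar a_t} \leq a_t/\bar a_t$; this lets me bundle the Lemma \ref{lem4} error together with the $\norm{\eta_i g_i}^2$ residuals coming from Lemma \ref{lem2} into a single expression proportional to $\sum_t (a_t/\bar a_t)\norm{\eta_t g_t}^2$. Collecting constants then produces the estimate from the sketch,
\[
\mathbb{E}\Big[\sum_{t=1}^T \norm{\nabla f(x_t)}_{\hat\eta_t}^2\Big] \leq C_1 + C_2\,\mathbb{E}\Big[\sum_{t=1}^T (a_t/\bar a_t)\norm{\eta_t g_t}^2\Big].
\]
Applying Lemma \ref{lem5} bounds the right-hand side by $C_2\eta^2 d\log(1+\tfrac{\sigma^2}{\epsilon^2}\sum_t a_t)$, while Lemma \ref{lem6} lower-bounds the left-hand side by $\tfrac{\eta T}{\sqrt{2\epsilon^2+2\sigma^2 T}}(\mathbb{E}\norm{\nabla f(x_\tau)}^{4/3})^{3/2}$. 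Rearranging gives the claimed $Bound(T)$.

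The main obstacle I anticipate is the bookkeeping in the second step: tracking how the $(1+\lambda\mu)$ factor from the quadratic expansion and the $(1-\mu)^{-3}$ growth from composing Lemmas \ref{lem2} and \ref{lem3} conspire to yield the precise constants $C_1 = 2(f(x_1)-f^*)/((1+\lambda\mu)\eta)$ and $C_2 = 2\eta(1+2\lambda)^2 Ld/((1+\lambda\mu)(1-\mu)^3) + 4\sigma d/(1-\mu)$, rather than looser surrogates. One has to be careful that the $\lambda\mu$ inner-product piece $\langle \nabla f(x_t), m_t - m_{t-1}\rangle$ is not bounded wastefully, since a crude application would inflate the $(1+\lambda\mu)^{-1}$ prefactor. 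Everything else is routine chain-application of the lemmas already in place.
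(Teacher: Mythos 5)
Your proposal follows essentially the same route as the paper's proof: the descent lemma, Lemma \ref{lem3} to unroll $\langle \nabla f(x_t), m_t\rangle$, Lemmas \ref{lem2} and \ref{lem4} to control the momentum and stochastic-gradient terms (using $a_t/\bar a_t \geq 1$ to merge the residuals), and Lemmas \ref{lem5} and \ref{lem6} to conclude. The one detail you flag but leave open --- handling $\lambda\mu\langle \nabla f(x_t), m_t - m_{t-1}\rangle$ without inflating constants --- is resolved in the paper by substituting $\mu m_{t-1} = m_t + \eta_t g_t$ so that $m_t + \lambda\mu(m_t - m_{t-1}) = (1+\lambda\mu-\lambda)m_t - \lambda\eta_t g_t$, after which the $i=t$ contributions recombine to yield exactly the $(1+\lambda\mu)$ prefactor.
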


The key ingredient of the proof of the theorem is the following estimate which we will prove later.
\begin{lemma}\label{lem7}
Assume the same setting as Theorem \ref{AdaGrad-high-probility}. We have the following estimate
\begin{equation}\label{equ19}
\begin{split}
& \mathbb{E}\sum_{t=1}^T \norm{\nabla f(x_t)}^2_{\hat{\eta}_t}  \leq \frac{2(f(x_1) - f^*)}{1+\lambda\mu} 
\\
&+ 
\big(\frac{2(1+2\lambda)^2L}{(1+\lambda\mu)(1-\mu)^3} +\frac{4\sigma}{\eta(1-\mu)}\big)\mathbb{E}\sum_{t=1}^T (a_t/\bar{a}_t)\norm{\eta_t g_t}^2,
\end{split}
\end{equation}
where $\hat{\eta}_t$ is defined as in Lemma \ref{lem4}.
\end{lemma}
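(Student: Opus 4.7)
The plan is to begin with the standard descent inequality implied by the $L$-smoothness of $f$, applied to the AdaUSM step $d_t := x_{t+1}-x_t = m_t + \lambda\mu(m_t - m_{t-1})$:
\begin{equation*}
f(x_{t+1}) \leq f(x_t) + \langle \nabla f(x_t), d_t\rangle + \frac{L}{2}\norm{d_t}^2.
\end{equation*}
Summing from $t=1$ to $T$ and using $f(x_{T+1})\geq f^*$ telescopes the function values and yields an inequality whose right-hand side splits into an inner-product sum and a squared-norm sum. For the squared-norm sum, writing $d_t = (1+\lambda\mu)m_t - \lambda\mu m_{t-1}$ and using the triangle inequality together with $1+\lambda\mu + \lambda\mu \leq 1+2\lambda$ delivers $\norm{d_t}^2 \leq (1+2\lambda)^2(\norm{m_t}^2 + \norm{m_{t-1}}^2)$-type bounds; Lemma~\ref{lem2} then replaces $\sum_t \norm{m_t}^2$ by $(1-\mu)^{-2}\sum_t \norm{\eta_t g_t}^2$.

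For the inner-product sum, I would rewrite $\sum_t\langle \nabla f(x_t), d_t\rangle = (1+\lambda\mu)\sum_t\langle \nabla f(x_t), m_t\rangle - \lambda\mu \sum_t \langle \nabla f(x_t), m_{t-1}\rangle$. Lemma~\ref{lem3} is then applied to the dominant first sum to express $\langle\nabla f(x_t), m_t\rangle$ as a geometric tail in $\mu$ of past $\norm{m_i}^2$ and $-\langle \nabla f(x_i),\eta_i g_i\rangle$ terms; the auxiliary second sum is controlled by Cauchy--Schwarz on $\langle\nabla f(x_t), m_{t-1}\rangle$ combined with a Lipschitz correction for $\nabla f(x_t)-\nabla f(x_{t-1})$ (or equivalently, by re-applying Lemma~\ref{lem3} at index $t-1$). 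Interchanging the summation orders and using $\sum_{t\geq i}\mu^{t-i}\leq 1/(1-\mu)$ collapses the double sum into an estimate of the form
\begin{equation*}
\sum_{t=1}^T \langle \nabla f(x_t), d_t\rangle \leq \frac{(1+\lambda\mu)(1+2\lambda)L}{1-\mu}\sum_{i=1}^T\norm{m_i}^2 - \frac{1+\lambda\mu}{1-\mu}\sum_{i=1}^T \langle \nabla f(x_i), \eta_i g_i\rangle,
\end{equation*}
modulo lower-order boundary terms.

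Taking total expectations and invoking Lemma~\ref{lem4} on each $-\mathbb{E}_i \langle \nabla f(x_i), \eta_i g_i \rangle$ converts the remaining inner-product sum into $-\tfrac{1}{2}\sum_i \norm{\nabla f(x_i)}^2_{\hat\eta_i}$ plus a residual $\tfrac{2\sigma}{\eta}\sum_i \mathbb{E}[\sqrt{a_i/\bar a_i}\,\norm{\eta_i g_i}^2]$. Non-decreasingness of $\{a_t\}$ forces $\bar a_t\leq a_t$, so $a_t/\bar a_t\geq 1$ and $\sqrt{a_t/\bar a_t}\leq a_t/\bar a_t$, which lets the residual be absorbed into the target term $(a_i/\bar a_i)\norm{\eta_i g_i}^2$. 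Rearranging to isolate $\sum_t \norm{\nabla f(x_t)}_{\hat\eta_t}^2$ on the left and multiplying through by $2/(1+\lambda\mu)$ produces the stated constants $\tfrac{2(f(x_1)-f^*)}{1+\lambda\mu}$ and the compound coefficient $\tfrac{2(1+2\lambda)^2 L}{(1+\lambda\mu)(1-\mu)^3}+\tfrac{4\sigma}{\eta(1-\mu)}$ on the summed gradient-square term.

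The main obstacle is the precise bookkeeping inside the Lemma~\ref{lem3} expansion: after decomposing $d_t = (1+\lambda\mu)m_t - \lambda\mu m_{t-1}$ and invoking the iterative bound on $\langle \nabla f(x_t), m_t\rangle$, the resulting geometric tails in $\mu$ contribute factors $1/(1-\mu)$ which, combined with the subsequent Lemma~\ref{lem2} bound on $\sum \norm{m_i}^2$, must accumulate to exactly $1/(1-\mu)^3$ in the final coefficient. Likewise, the monotonicity $\bar a_t\leq a_t$ has to be invoked at precisely the right step to upgrade the $\sqrt{a_t/\bar a_t}$ appearing in Lemma~\ref{lem4} into $a_t/\bar a_t$, so that the residual aligns with the right-hand side of the target inequality.
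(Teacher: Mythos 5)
Your overall architecture matches the paper's (descent lemma, Lemma~\ref{lem3} for the momentum inner product, Lemma~\ref{lem2} for $\sum_t\norm{m_t}^2$, Lemma~\ref{lem4} for the stochastic inner product, geometric double-sum collapse, and monotonicity of $\{a_t\}$ to upgrade $\sqrt{a_t/\bar a_t}$ to $a_t/\bar a_t$), but the decomposition you choose for the inner-product term creates a gap. You write $\langle\nabla f(x_t),d_t\rangle=(1+\lambda\mu)\langle\nabla f(x_t),m_t\rangle-\lambda\mu\langle\nabla f(x_t),m_{t-1}\rangle$ and propose to control the second piece by Cauchy--Schwarz or by ``re-applying Lemma~\ref{lem3} at index $t-1$.'' Neither works: Lemma~\ref{lem3} gives an \emph{upper} bound on $\langle\nabla f(x_{t-1}),m_{t-1}\rangle$, but the coefficient $-\lambda\mu$ is negative, so you would need a \emph{lower} bound; and Cauchy--Schwarz produces $\norm{\nabla f(x_t)}\norm{m_{t-1}}$, which is linear (not quadratic) in the momentum and cannot be absorbed into $\sum_t(a_t/\bar a_t)\norm{\eta_tg_t}^2$ without losing a $\sqrt{T}$ factor. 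The paper avoids this entirely by substituting $\mu m_{t-1}=m_t+\eta_tg_t$ to get $d_t=(1+\lambda\mu-\lambda)m_t-\lambda\eta_tg_t$: the constraint $\lambda\le 1/(1-\mu)$ makes the coefficient $1+\lambda\mu-\lambda$ non-negative (so the one-sided Lemma~\ref{lem3} bound applies), and the leftover $-\lambda\langle\nabla f(x_t),\eta_tg_t\rangle$ is exactly what Lemma~\ref{lem4} handles. Your sketch never invokes this constraint, which is the telltale sign the key algebraic step is missing.

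A second, related problem is the order of operations in your intermediate inequality $\sum_t\langle\nabla f(x_t),d_t\rangle\le\cdots-\frac{1+\lambda\mu}{1-\mu}\sum_i\langle\nabla f(x_i),\eta_ig_i\rangle$. The terms $\langle\nabla f(x_i),\eta_ig_i\rangle$ have indefinite sign, so you cannot replace the exact geometric weights $\sum_{t=i}^T\mu^{t-i}=\frac{1-\mu^{T-i+1}}{1-\mu}$ by the uniform constant $\frac{1}{1-\mu}$ before taking expectations. The paper applies Lemma~\ref{lem4} \emph{first}, splitting each such term into a non-positive piece $-\frac12\norm{\nabla f(x_i)}^2_{\hat\eta_i}$ (for which only the diagonal $t=i$ of the double sum is kept, yielding the coefficient $\frac{1+\lambda\mu}{2}$ on the left-hand side) and a non-negative residual (for which the full $\frac{1}{1-\mu}$ is used). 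This asymmetric treatment is what produces the stated constants $\frac{2(f(x_1)-f^*)}{1+\lambda\mu}$ and $\frac{4\sigma}{\eta(1-\mu)}$; carrying your uniform coefficient $\frac{1+\lambda\mu}{1-\mu}$ through Lemma~\ref{lem4} would give $\frac{2(1-\mu)(f(x_1)-f^*)}{1+\lambda\mu}$ and $\frac{4\sigma}{\eta}$ instead, so your sketch is also internally inconsistent with the constants you claim to reach.
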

 
\begin{proof}[Proof of Lemma \ref{lem7}]
Since $x_{t+1} = x_t + m_t + \lambda\mu (m_t - m_{t-1})$, it follows by Lipschitz continuity of the gradient of $f$ and the descent lemma in \cite{nesterov2013introductory} that
\begin{align} \label{equ20}
f(x_{t+1}) \leq& f(x_t) + \langle \nabla f(x_t), m_t + \lambda \mu (m_t - m_{t-1})\rangle \nonumber \\
&+ \frac{L}{2}\norm{m_t + \lambda \mu (m_t - m_{t-1})}^2.
\end{align}
Since $m_t = \mu m_{t-1} - \eta_t g_t$, it follows that
\begin{equation}\label{equ4-021}
\begin{split}
&f(x_{t+1}) \\
& \leq f(x_t) + \langle \nabla f(x_t), (1+\lambda\mu -\lambda) m_t - \lambda \eta_t g_t \rangle\\
&\quad + \frac{L}{2} \norm{m_t + \lambda \mu (m_t - m_{t-1})}^2 \\
& \leq f(x_t) + (1+\lambda\mu -\lambda)\langle \nabla f(x_t), m_t\rangle - \lambda \langle \nabla f(x_t), \eta_t g_t \rangle \\
& \quad + \frac{L}{2} \norm{m_t + \lambda \mu (m_t - m_{t-1})}^2.
\end{split}
\end{equation}
By Lemma \ref{lem3}, we have 
\begin{align}\label{equ21}
& \langle \nabla f(x_t), m_t\rangle \nonumber \\
&\leq (1+2\lambda)L\sum_{i=1}^{t-1} \norm{m_{i}}^2\mu^{t-i} - \sum_{i=1}^t \langle \nabla f(x_i), \eta_i g_i\rangle \mu^{t-i}.
\end{align}
Note that $1+\lambda\mu -\lambda \geq 0$ since $\lambda \leq 1/(1-\mu)$. Combining Eq.~\eqref{equ4-021} and Eq.~\eqref{equ21}, we have
\begin{equation} \label{equ22}
\begin{split}
&f(x_{t+1}) \\
& \leq f(x_t) + (1+\lambda\mu -\lambda)(1+2\lambda) L \sum_{i=1}^{t-1}\norm{m_{i}}^2\mu^{t-i} \\
&\ \quad + \frac{L}{2} \norm{m_t + \lambda \mu (m_t - m_{t-1})}^2 \\
&\ \quad - (1+\lambda\mu-\lambda)\sum_{i=1}^t \langle \nabla f(x_i), \eta_i g_i \rangle \mu^{t-i} - \lambda \langle \nabla f(x_t), \eta_t g_t \rangle.
\end{split}
\end{equation}
On one hand, by arithmetic inequality, we have
\begin{equation}
\norm{m_t + \lambda \mu(m_t - m_{t-1})}^2 \leq 2(1+\lambda\mu)^2 \norm{m_t}^2 + 2(\lambda\mu)^2 \norm{m_{t-1}}^2.
\end{equation}
Hence,
\begin{equation}\label{equ4-045}
\begin{split}
&\ (1+\lambda\mu -\lambda)(1+2\lambda) L \sum_{i=1}^{t-1}\norm{m_{i}}^2\mu^{t-i} \\
& \quad + \frac{L}{2} \norm{m_t + \lambda \mu (m_t - m_{t-1})}^2 \\
\leq &\ (1+\lambda\mu)(1+2\lambda) L \sum_{i=1}^{t-1}\norm{m_{i}}^2\mu^{t-i} - \lambda(1+2\lambda)\mu L \norm{m_{t-1}}^2 \\
&\ \quad + (1+\lambda\mu)^2 L \norm{m_t}^2 + (\lambda\mu)^2 L \norm{m_{t-1}}^2 \\
\leq &\ (1+2\lambda)^2 L \sum_{i=1}^t \norm{m_i}^2 \mu^{t-i}.
\end{split}
\end{equation}
Summarizing Eq.~\eqref{equ22} and Eq.~\eqref{equ4-045}, we have the following cleaner inequality:
\begin{equation}\label{equ5-047}
\begin{split}
f(x_{t+1}) \leq  &\ f(x_t) + (1+2\lambda)^2 L \sum_{i=1}^t \norm{m_i}^2 \mu^{t-i} \\
&\ \quad - (1+\lambda\mu -\lambda) \sum_{i=1}^t \langle \nabla f(x_i), \eta_i g_i \rangle \mu^{t-i} \\
&\ \quad - \lambda \langle \nabla f(x_t), \eta_t g_t\rangle.
\end{split}
\end{equation}
On the other hand, by Lemma \ref{lem3}, we have that
\begin{equation} \label{equ23}
\begin{aligned}
&-\mathbb{E}_i\langle f(x_i), \eta_i g_i\rangle \\
&\leq -\frac{1}{2}\norm{\nabla f(x_i)}^2_{\hat{\eta}_i} + \frac{2\sigma}{\eta} \mathbb{E}_i \left[\sqrt{(a_i/\bar{a}_i)}\norm{\eta_i g_i}^2\right]\\
&\leq -\frac{1}{2}\norm{\nabla f(x_i)}^2_{\hat{\eta}_i} + \frac{2\sigma}{\eta} \mathbb{E}_i \left[{(a_i/\bar{a}_i)}\norm{\eta_i g_i}^2\right],\ \forall i.
\end{aligned}
\end{equation}
The second inequality is due to that $(a_t/\bar{a}_t) \geq 1$, so $\sqrt{(a_t/\bar{a}_t)} \leq (a_t/\bar{a}_t)$. Combining Eq.~\eqref{equ5-047} and Eq.~\eqref{equ23}, taking sum from $1$ to $T$ and taking expectation, followed by moving the gradient square terms to the left-hand side, we obtain that
\begin{equation}\label{equ24}
\begin{split}
&\ (1+\lambda\mu -\lambda)\mathbb{E}\sum_{t=1}^T \sum_{i=1}^t \frac{1}{2}\norm{\nabla f(x_i)}^2_{\hat{\eta}_i} \mu^{t-i}\\
&\qquad + \lambda \mathbb{E}\sum_{t=1}^T \frac{1}{2}\norm{\nabla f(x_t)}_{\hat{\eta}_t}^2 \\
\leq &\ f(x_1) - f^* + (1+2\lambda)^2 L\mathbb{E}\sum_{t=1}^T\sum_{i=1}^{t}\norm{m_i}^2\mu^{t-i} \\
&\ + \frac{2\sigma}{\eta}\left[ (1+\lambda\mu-\lambda)\mathbb{E}\sum_{t=1}^T\sum_{i=1}^t (a_i/\bar{a}_i)\norm{\eta_i g_i}^2\mu^{t-i} \right] \\
&\ + \frac{2\sigma}{\eta}\left[\lambda \mathbb{E} \sum_{t=1}^T (a_t/\bar{a}_t)\norm{\eta_t g_t}^2 \right]\\
\leq &\ f(x_1) - f^* + (1+2\lambda)^2 L\ \mathbb{E}\sum_{t=1}^T\sum_{i=1}^t \norm{m_i}^2\mu^{t-i} \\
&\ \quad + \frac{2\sigma(1+\lambda\mu)}{\eta} \mathbb{E} \sum_{t=1}^T\sum_{i=1}^t (a_i/\bar{a}_i)\norm{\eta_i g_i}^2\mu^{t-i}.
\end{split}
\end{equation}
The last inequality is due to 
$ \lambda\sum_{t=1}^T (a_t/\bar{a}_t)\norm{\eta_t g_t}^2 \leq \lambda\sum_{t=1}^T \sum_{i=1}^t (a_i/\bar{a}_i)\norm{\eta_i g_i}^2\mu^{t-i}. $
Similarly, for the left-hand side, note that $1+\lambda\mu -\lambda \geq 0$ since $\lambda \leq 1/(1 - \mu)$, we have that
\begin{equation}\label{equ25}
\begin{split}
&\ (1+\lambda\mu-\lambda)\sum_{t=1}^T \sum_{i=1}^t \norm{\nabla f(x_i)}^2_{\hat{\eta}_i}\mu^{t-i} 
+ \lambda \sum_{t=1}^T \norm{\nabla f(x_t)}^2_{\hat{\eta}_t} \\
\geq &\ (1+\lambda\mu - \lambda)\sum_{t=1}^T \norm{\nabla f(x_t)}^2_{\hat{\eta}_t} 
+ \lambda \sum_{t=1}^T \norm{\nabla f(x_t)}^2_{\hat{\eta}_t} \\
= &\ (1+\lambda\mu) \sum_{t=1}^T \norm{\nabla f(x_t)}^2_{\hat{\eta}_t} .
\end{split}
\end{equation} 
Now we are left to estimate the third term and the last term in the right-hand side of Eq.~\eqref{equ24}. We apply the double-sum trick:
\begin{align}\label{equ26}
\sum_{t=1}^T\sum_{i=1}^{t} \norm{m_i}^2\mu^{t-i} 
&= \sum_{i=1}^{T}\sum_{t=i}^T\norm{m_i}^2\mu^{t-i} \nonumber\\
&\leq \frac{1}{1-\mu}\sum_{i=1}^{T}\norm{m_i}^2,
\end{align}
\begin{align}\label{equ27}
\sum_{t=1}^T\sum_{i=1}^{t}\norm{\eta_i g_i}^2 \mu^{t-i} &= \sum_{i=1}^T\sum_{t=i}^T \norm{\eta_i g_i}^2\mu^{t-i} \nonumber\\
&\leq \frac{1}{1-\mu}\sum_{i=1}^T \norm{\eta_i g_i}^2.
\end{align}
Combining Eq.~\eqref{equ24}, Eq.~\eqref{equ25}, Eq.~\eqref{equ26}, and Eq.~\eqref{equ27}, we have
\begin{align}\label{equ28}
& \frac{1+\lambda\mu}{2}\mathbb{E}\left[\sum_{t=1}^T \norm{\nabla f(x_t)}^2_{\hat{\eta}_t}\right] \\
& \leq f(x_1)-f^* + \frac{(1+2\lambda)^2L}{1-\mu} \mathbb{E}\left[\sum_{t=1}^T\norm{m_t}^2\right] \\
&\ \quad + \frac{2\sigma(1+\lambda\mu)}{\eta(1-\mu)} \mathbb{E}\left[\sum_{t=1}^T (a_t/\bar{a}_t)\norm{\eta_t g_t}^2\right].
\end{align}
Finally, by Lemma \ref{lem2} and $(a_t/\bar{a}_t) \geq 1$ we have
\begin{align}\label{equ29}
\mathbb{E}\left[\sum_{t=1}^T \norm{m_t}^2\right] 
&\leq \frac{1}{(1-\mu)^2}\mathbb{E}\sum_{t=1}^T\norm{\eta_t g_t}^2 \nonumber \\
&\leq \frac{1}{(1-\mu)^2} \mathbb{E}\left[\sum_{t=1}^T(a_t/\bar{a}_t)\norm{\eta_t g_t}^2\right].
\end{align}
Combining Eq.~\eqref{equ28} and Eq.~\eqref{equ29}, we obtain
\begin{equation}
\begin{split}
& \mathbb{E}\sum_{t=1}^T \norm{\nabla f(x_t)}^2_{\hat{\eta}_t}  \leq \frac{2(f(x_1)-f^*)}{1+\lambda\mu} \\
&+ \big(\frac{2(1+2\lambda)^2L}{(1+\lambda\mu)(1-\mu)^3} + \frac{4\sigma}{\eta(1-\mu)}\big)\mathbb{E}\sum_{t=1}^T (a_t/\bar{a}_t)\norm{\eta_t g_t}^2.
\end{split}
\end{equation}
The proof is completed.
\end{proof}

We ultimately prove Theorem \ref{AdaUSM-high-probility}.
\begin{proof}[Proof of Theorem]
The theorem is an immediate result of Lemma \ref{lem7}, Lemma \ref{lem5}, and Lemma \ref{lem6}.
\end{proof}

\end{document}